\documentclass{article}

\usepackage{microtype}
\usepackage{graphicx}
\usepackage{subcaption}
\usepackage{booktabs} %
\usepackage{makecell}
\usepackage[table]{xcolor}

\usepackage{hyperref}

\usepackage[preprint]{icml2026}

\usepackage{amsmath}
\usepackage{amssymb}
\usepackage{mathtools}
\usepackage{amsthm}
\usepackage{aliascnt}
\usepackage{enumitem}

\usepackage{amsmath,amsfonts,bm}
\usepackage{amsthm}

\definecolor{BrickRed}{rgb}{0.8, 0.25, 0.33}

\usepackage{tikz}

\definecolor{My_NavyBlue}{rgb}{0.0, 0.0, 0.5}

\newcommand{\defined}[1]{{\color{black}\text{#1}}}

\definecolor{DarkRed}{rgb}{0.7, 0.1, 0.1}

\definecolor{DarkGreen}{rgb}{0.1, 0.5, 0.1}

\newcommand{\dif}{\mathrm{d}}

\usepackage{amsmath,amsfonts,bm}

\def\eqref#1{equation~\ref{#1}}

\def\1{\bm{1}}

\DeclareMathAlphabet{\mathsfit}{\encodingdefault}{\sfdefault}{m}{sl}
\SetMathAlphabet{\mathsfit}{bold}{\encodingdefault}{\sfdefault}{bx}{n}

\newcommand{\E}{\mathbb{E}}

\DeclareMathOperator*{\argmin}{arg\,min}

\DeclareMathOperator*{\arginf}{arg\,inf}

\theoremstyle{plain}

\newaliascnt{proposition}{theorem}
\newtheorem{proposition}[proposition]{Proposition}
\aliascntresetthe{proposition}

\newaliascnt{lemma}{theorem}
\newtheorem{lemma}[lemma]{Lemma}
\aliascntresetthe{lemma}

\newaliascnt{corollary}{theorem}

\aliascntresetthe{corollary}

\theoremstyle{definition}
\newaliascnt{definition}{theorem}
\newtheorem{definition}[definition]{Definition}
\aliascntresetthe{definition}

\newaliascnt{assumption}{theorem}
\newtheorem{assumption}[assumption]{Assumption}
\aliascntresetthe{assumption}

\theoremstyle{remark}
\newaliascnt{remark}{theorem}
\newtheorem{remark}[remark]{Remark}
\aliascntresetthe{remark}

\newaliascnt{example}{theorem}
\newtheorem{example}[example]{Example}
\aliascntresetthe{example}

\usepackage[capitalize,noabbrev]{cleveref}

\crefname{theorem}{Theorem}{Theorems}
\crefname{proposition}{Proposition}{Propositions}
\crefname{lemma}{Lemma}{Lemmas}
\crefname{corollary}{Corollary}{Corollaries}
\crefname{definition}{Definition}{Definitions}
\crefname{assumption}{Assumption}{Assumptions}
\crefname{remark}{Remark}{Remarks}
\crefname{example}{Example}{Examples}
\Crefname{theorem}{Theorem}{Theorems}
\Crefname{proposition}{Proposition}{Propositions}
\Crefname{lemma}{Lemma}{Lemmas}
\Crefname{corollary}{Corollary}{Corollaries}
\Crefname{definition}{Definition}{Definitions}
\Crefname{assumption}{Assumption}{Assumptions}
\Crefname{remark}{Remark}{Remarks}
\Crefname{example}{Example}{Examples}
\crefname{appendix}{Appendix}{Appendices}
\Crefname{appendix}{Appendix}{Appendices}

\usepackage{tabularx} %

\usepackage[textsize=tiny]{todonotes}

\icmltitlerunning{Conservative Continuous-Time Treatment Optimization}

\begin{document}
\twocolumn[
  \icmltitle{Conservative Continuous-Time Treatment Optimization}

  \icmlsetsymbol{equal}{*}

  \begin{icmlauthorlist}
    \icmlauthor{Nora Schneider}{equal,tum,helm,mcml}
    \icmlauthor{Georg Manten}{equal,tum,helm,mcml}
    \icmlauthor{Niki Kilbertus}{tum,helm,mcml}
  \end{icmlauthorlist}

  \icmlaffiliation{tum}{Technical University of Munich}
  \icmlaffiliation{helm}{Helmholtz Munich}
  \icmlaffiliation{mcml}{Munich Center for Machine Learning}

  \icmlcorrespondingauthor{Nora Schneider}{nora.schneider@tum.de}
  \icmlcorrespondingauthor{Georg Manten}{georg.manten@tum.de}
    \icmlcorrespondingauthor{Niki Kilbertus}{niki.kilbertus@tum.de}
  \icmlkeywords{Machine Learning, ICML}

  \vskip 0.3in
]

\printAffiliationsAndNotice{\icmlEqualContribution}

\begin{abstract}
We develop a conservative continuous-time stochastic control framework for treatment optimization from irregularly sampled patient trajectories. The unknown patient dynamics are modeled as a controlled stochastic differential equation with treatment as a continuous-time control. Naive model-based optimization can exploit model errors and propose out-of-support controls, so optimizing the estimated dynamics may not optimize the true dynamics. To limit extrapolation, we add a consistent signature-based MMD regularizer on path space that penalizes treatment plans whose induced trajectory distribution deviates from observed trajectories. The resulting objective minimizes a computable upper bound on the true cost. Experiments on benchmark datasets show improved robustness and performance compared to non-conservative baselines.
\end{abstract}

\section{Introduction}

A key challenge in healthcare is finding the best treatment plan to guide patient trajectories toward a desired clinical outcome, such as selecting a chemotherapy dosing schedule that maximally reduces tumor size.
Unlike a one-shot decision, treatment is inherently temporal and continuous.
Patient state evolves constantly, measurements are taken irregularly, and interventions are applied over time.
Designing optimal plans therefore requires a model that can estimate potential future patient trajectories under candidate treatment plans, including ones never administered before.

Historically, this has been approached by mechanistic continuous-time models, such as pharmacokinetic differential equations.
These often manually prescribed and hard-earned mathematical models are interpretable and continuous by design, but are often still overly simplistic and fail to capture the complex and stochastic dynamics found in heterogeneous patient settings.
Flexible data-driven methods from machine learning, in particular causal inference, promise to overcome these limitations.
However, most of these methods operate in discrete time with fixed decision points (and often finite actions), which does not account for the continuous and irregular nature of clinical reality.

In high stakes decisions, we often cannot perform direct experimentation due to practical or ethical considerations.
Therefore, one is typically constrained to an offline setting: given observed trajectories collected under existing treatment plans, we aim to design future treatment plans that ideally perform better than historical ones.
In this offline setting, when first learning the system's dynamics and then optimizing over that model, the optimization may act adversarially by finding and exploiting model errors---in particular erroneous extrapolation behavior in data-poor regions.
This problem, well known in offline RL, is severely exacerbated in continuous-time treatment planning, where actions live in path space.
The classical overlap (or positivity) assumption, already challenging in discrete settings, becomes practically impossible to satisfy.

In this paper, we develop a principled method for \emph{conservative optimization of continuous-time treatments from irregularly sampled observational trajectories}.
We model patient dynamics as a controlled stochastic differential equation with treatment as a continuous-time control, learn this model from data, and conservatively penalize deviation between model-induced and observed trajectory distributions via a signature-kernel conditional MMD on path space, when optimizing treatments.

\section{Related Work}

\textbf{Continuous-Time Longitudinal Causal Inference.}
Motivated by these limitations, machine learning---more specifically causal inference---has proposed flexible data-driven techniques for longitudinal treatment effect estimation by extending ideas from the potential outcomes framework.
Recent work includes extensions to continuous-time settings that handle irregular observations, event-driven dynamics, and time-dependent confounding, but do not offer a principled way to optimize treatment plans from finite observational data.
Some methods focus on population-level outcomes, unable to condition on patient-specific history, which prevents predicting personalized responses to candidate treatment plans \citep{sun2022causal,ying2025causal,lok2008statistical,schulam2017reliable,hizli2023causal}.
Other approaches condition on patient history via neural ODEs or neural CDEs, but typically model treatments as categorical variables applied at discrete decision times \citep{de2022predicting,seedat2022continuous,hess2023bayesian,hess2025stabilized}.
The INSITE-framework \citep{kacprzyk2024ode} models continuous-valued treatments as controls in the governing differential equations for treatment effect estimation.

Although these methods are identifiable under continuous-time extensions of standard potential-outcomes assumptions in the infinite-sample limit, they rely on overlap/positivity, which is already fragile in discrete settings \citep{d2021overlap}.
In continuous time, finite data make this issue substantially worse: covering all treatments at all times for all histories is impossible, so optimization can select plans that drive the state-treatment trajectory far outside the empirical support, where estimates may become unreliable.

Overall, existing works stop at estimation and do not optimize treatments, often consider treatment as categorical and/or applied only at discrete event times preventing them from discovering new continuous-time controls, and when naively optimized over, may suffer from large extrapolation error due to poor overlap.

\textbf{Optimal Dynamic Treatment Regimes (DTR)} learn a sequence of history-dependent decision rules that map patient information at each decision point to a treatment to optimize expected clinical outcomes \citep{murphy2003optimal,robins2004optimal,chakraborty2014dynamic,li2023optimal,tsirtsis2023finding}.
Most of this literature is formulated in discrete time with fixed decision points, and is often developed with sequential decision making (and sometimes adaptive data collection) in mind rather than personalized planning from a fixed observational dataset.
Offline variants exist -- for example, \citet{zhou2023optimizing} propose a pessimistic Bayesian Q-learning approach for learning DTRs from observational data, and \citet{wang2025variational} propose variational counterfactual intervention planning -- but remain discrete-time frameworks.
As a result, existing DTR approaches do not address irregular observation schedules typical in clinical time series, where measurements and treatment adjustments occur at non-uniform times.
\citet{xu2024modeling} instead learn a joint model not only over dosage values, but also dosage times, while still fundamentally considering a discrete sequence of events rather than a continuous-time control problem.

\textbf{Offline Reinforcement Learning (RL)} has also developed ideas around pessimism and uncertainty-aware regularization to prevent model exploitation in discrete Markov decision processes (MDP) \citep{yu2020mopo, kidambi2020morel,yu2021combo}, also extended to truncating trajectories depending on model uncertainty \citep{zhang2023uncertainty}.
Again, existing works focus on discrete settings and do not transfer to continuous-time with irregular observations and do not have to grapple with the identification of potential outcomes in our setting.
Perhaps most closely related to our work is the approach by \citet{koprulu2025neural}, who propose a neural stochastic differential equation model to explicitly capture both aleatoric and epistemic model uncertainty.
They truncate/penalize rollouts based on implicit, local model uncertainty.
Unlike their approach, we do not rely directly on model uncertainty, but consistently penalize distributional mismatch of trajectories induced by a candidate treatment and the observed distribution.

\textbf{Our main contributions include:}
\begin{itemize}[leftmargin=*]
    \item We formalize personalized, continuous-time, offline treatment optimization from irregular observations within a stochastic control framework built on controlled SDEs.
    \item We derive a conservative planning objective that upper bounds the true cost via conditional integral probability metrics on path space.
    \item We develop a practically tractable and consistent implementation of this regularizer via a signature-kernel conditional MMD penalizing out-of-support treatment plans to avoid model exploitation.
    \item We demonstrate empirically that our method outperforms non-conservative baselines in performance and reliability.
\end{itemize}

\section{Problem Setting}\label{subsec:problem}

We consider the problem of finding optimal treatment plans that guide patient trajectories toward desirable outcomes from a causal perspective.

\textbf{Setup.}
Throughout this paper, let $I=[t_s,t_f]\subset\mathbb R$ be a compact time interval, $(\Omega,\mathcal F,\mathbb F,P)$ a complete filtered probability space with filtration $\mathbb F=(\mathcal F_t)_{t\in I}$.
On this space, we consider stochastic processes $U: I \times \Omega \rightarrow A \subseteq \mathbb R^{d_u}$ called the \emph{treatment process} (e.g., chemotherapy), $Y:I \times \Omega \rightarrow \mathbb{R}^{d_y}$ the \emph{outcome} process (e.g., tumor volume) and $Z:I \times \Omega \rightarrow \mathbb{R}^{d_z}$ denotes time-varying covariates (e.g., biomarkers), compactly denoted in the combined process $X=(Z,Y): I \times \Omega \rightarrow \mathbb R^{d_x}$, where $d_x=d_y+d_z$, called \emph{state} process.
For a fixed treatment plan $u:I\to A\subseteq\mathbb R^{d_u}$, $ X^{(u, \mathrm{pot})}=(X^{(u,\mathrm{pot})}_t)_{t\in I}$ is the \emph{potential state process under $u$}, i.e., the trajectory that would be observed if the treatment were externally set to follow $u$ over the entire interval $I$.
For a stochastic process, we write $X_{[t_1,t_2]}$ for the restriction of $X$ to $[t_1,t_2]$,  viewed as a path-valued random variable (RV) $\Omega \to (\mathbb{R}^{d_x})^{[t_1,t_2]}$, denote its law by $P_{X_{[t_1,t_2]}} = P \circ X_{[t_1,t_2]}^{-1} \in \mathcal{M}_1((\mathbb{R}^{d_x})^{[t_1,t_2]})$ and write $X=\{X_t\}_{t\in I}$ for the associated family of time-indexed RVs.\footnote{$\mathcal{M}_1(\bullet)$ is the set of all probability measures on $\bullet$, $(\mathbb{R}^{d_x})^{[t_1,t_2]}$ the set of maps from set $[t_1,t_2]$ to set $\mathbb{R}^{d_x}$.}
In addition, the \emph{filtration} of $X$ is $\mathcal{F}^{X} = (\mathcal{F}^{X}_t)_{t \in I}$, $\mathcal{F}^{X}_t = \sigma(X_{[t^\prime]}\mid  t_s \leq t^\prime \leq t)$ and we often write $P_{Y\mid X_{[t_1,t_2]}} = P_{Y \mid \mathcal{F}^X_{t_1, t_2}}$, where $\mathcal{F}^X_{t_1, t_2} = \sigma (X_t \mid t_1 \leq t \leq t_2)$.
We use lowercase letters for sample-path realizations, e.g., $x=(X_t(\omega))_{t\in I}$ for $\omega\in\Omega$.

\textbf{Data.}
We observe $N$ i.i.d.\ patient trajectories. For each patient $i$, measurements are recorded at a finite, irregular time grid $t_s \le t^{(i)}_1 < \cdots < t^{(i)}_{n_i} \le t_f$, which may differ across patients. At each time point $t^{(i)}_j$ we observe the triplet $\big(Z_{t^{(i)}_j},\,Y_{t^{(i)}_j},\,U_{t^{(i)}_j}\big)$. We denote the dataset by $\mathcal D=\{(z^{(i)},y^{(i)},u^{(i)})\}_{i=1}^N$, where $z^{(i)}=\{(t^{(i)}_j,z^{(i)}_j)\}_{j=1}^{n_i}$, $y^{(i)}=\{(t^{(i)}_j,y^{(i)}_j)\}_{j=1}^{n_i}$, and $u^{(i)}=\{(t^{(i)}_j,u^{(i)}_j)\}_{j=1}^{n_i}$ consist of time-value tuples.

\textbf{Objective.} %
Consider a patient of interest at decision time $t_0 \in I$. Given the available information up to $t_0$ (i.e., $\mathcal{F}_{t_0}$), we aim to find an optimal treatment plan $u^* : [t_0, t_f] \to \mathbb{R}^{d_u}$ that minimizes the following function of the potential state process $(X^{(u, \mathrm{pot})}_t)_{t \in [t_0, t_f]}$
\begin{align}\label{eq:treatment_optimization}
\min_{u \in \mathcal{U}_{\mathrm{adm}}}
\!\!\!\mathbb{E}\Big[
    \int_{t_0}^{t_f}\!\!\!\! f\big(t, X^{(u, \mathrm{pot})}_t\!, u_t\big)\dif t
    + g\big(X^{(u, \mathrm{pot})}_{t_f}\big)
    \Big| \mathcal{F}_{t_0} 
\Big]
\end{align}
where $f:I \times \mathbb R^{d_x}\times A\to\mathbb R$ and $g:\mathbb R^{d_x}\to\mathbb R$ are running and terminal cost functions, respectively, $\mathcal{U}_{\mathrm{adm}}$ is the class of admissible treatment plans, which we specify later. %
\section{Methodology}

Our objective in \Cref{eq:treatment_optimization} poses two key challenges:
(a) Potential state trajectories $X^{(u, \mathrm{pot})}$ are challenging to estimate from observational data, because they model hypothetical scenarios that cannot be observed simultaneously \citep{imbens2015causal}. We address this with a structural dynamics model and approximate it from observational data; we then show which assumptions on the treatment process warrant identification of potential state trajectories (\Cref{sec:structural_dyn_model}). 
(b) Optimizing over treatment plans from observational data is challenging because the data consists of historical treatment plans only. When evaluating new treatment plans, any learned model for the potential state process must extrapolate, and predictions may be inaccurate due to model error. When optimized over, these errors can be exploited, resulting in suboptimal treatment plans. We address this and propose an augmented objective that regularizes deviation from the observed trajectory distribution, resulting in a conservative upper bound on the performance of the selected treatment plan (\Cref{sec:conservative_optimization}).

\subsection{Structural Dynamics Model for Identifiability} \label{sec:structural_dyn_model}
We model the patient state as solutions of a controlled stochastic dynamical system, with treatment as the control input. 
This structural model gives mathematical meaning to potential state trajectories for deterministic plans and, with standard assumptions on the treatment process, connects these potential outcomes to the observational data distribution. 
Technical details are deferred to \Cref{app:identifiability}.

\begin{assumption}[Structural Dynamics Model]\label{assum:mechanistic_model}
    The observed state process $X = (X_t)_{t \in I}$ evolves under the (possibly stochastic) treatment process  $U = (U_t)_{t \in I}$ as
\begin{align}\label{eq:governing_sde}
    \mathrm{d}X_t &= \mu(X_t, U_t)\,\mathrm{d}t + \sigma(X_t, U_t)\,\mathrm{d}W_t,
\end{align}
where $X_0 \sim \mu_0 \in \mathcal{M}_1 (\mathbb{R}^{d_x})$, $W=(W_t)_{t\in I}$ a standard $d_w$-dimensional Wiener process and $\mu$ and $\sigma$ the measurable drift and diffusion functions.
For a deterministic plan $u:I\to A$, we define $X^{(u, \mathrm{pot})}$ as the solution of \cref{eq:governing_sde} with $U_t=u_t$ for all $t\in I$ and $X^{(u, \mathrm{pot})}_0\sim \mu_0$.
\end{assumption}
We further assume that $(\mu,\sigma)$ in \Cref{eq:governing_sde} satisfy the standard regularity conditions (i.e., global-Lipschitz and linear growth assumption) ensuring existence and pathwise uniqueness of a strong solution to the above controlled SDE. Details can be found in \Cref{app:identifiability}.
With the structural dynamics model, the observed state trajectories are recovered by the potential process under the realized, observed treatment plan, meaning $X = X^{(U, \mathrm{pot})}$ a.s., as they share the same underlying stochastic dynamics $(\mu,\sigma,W)$ and only differ through the sampled plan $u$ and initialization $x_0$.
With $\mathbb F=(\mathcal F_t)_{t\in I}$ being the (completed, right-continuous) filtration generated by the observed processes $(X,U)$, i.e., $\mathcal F_t := \sigma\big(X_{[t_0,t]},\,U_{[t_0,t]}\big)$, we make the following two assumptions on the treatment process $U$ following \citet{ying2025causal}.
\begin{assumption}[Full conditional randomization]\label{assum:fcr_formal}
There exists a bounded function $\varepsilon(t,\eta)>0$ such that $\int_{t_0}^{t_f}\varepsilon(t,\eta)\,dt \to 0$ as $\eta\downarrow 0$, and for all $t\in I$, $\eta>0$,
\begin{align*}
& \sup_{u\in\mathcal{U}_{\mathrm{adm}}} \mathbb{E} \left[ \left\| P_{X_{[t, t_f]}^{(u,\mathrm{pot})} \mid U_{[t_0,t+\eta]}, \mathcal F_t} -
P_{X_{[t, t_f]}^{(u,\mathrm{pot})}\mid \mathcal F_t }
\right\|_{\mathrm{TV}} 
\right] \\ 
&\qquad \le \varepsilon(t,\eta) , 
\end{align*}
where $\|\cdot\|_{\mathrm{TV}}$ denotes total variation distance on the relevant path space.
\end{assumption}
We can read \Cref{assum:fcr_formal} as the continuous-time analogue of ``no unmeasured confounding''.
Next, we also extend the typical overlap/positivity assumption to continuous-time. 
\begin{assumption}[Positivity (Overlap)]\label{assum:positivity_formal}
Let $P$ denote the observational law of the joint trajectory $(X_{[t_0,t_f]},U_{[t_0,t_f]})$ on the corresponding product path space (with its canonical $\sigma$-field). For each plan $u\in\mathcal U_{\mathrm{adm}}$, let $P^{(u)}$ denote the interventional law of the joint trajectory
$( X^{(u,\mathrm{pot})}_{[t_0,t_f]},u_{[t_0,t_f]})$ obtained by fixing the treatment input to $u$.
Assume that for every $u\in\mathcal U$,
\begin{align*}
P^{(u)} \ll P .
\end{align*}
meaning that events $B$ on path space, that are impossible in the observed data ($P(B) = 0$), are also impossible under the intervention data ($(P^{(u)}(B)=0$). 
\end{assumption}
We already argued that, even though required for rigorous identification statements, overlap is extremely fragile if not practically impossible in continuous-time settings. This is a key motivation for our practical, finite sample remedy of penalizing treatment plans that have zero probability under the observed law.

Finally, in contrast to \citet{ying2025causal}, we do not assume that the true data-generating process is known, but instead we make the following assumption about the chosen model being well-specified.
\begin{assumption}[Distribution]\label{assum:sde_estimation_assumption} There exist a parameter $\theta^\star$ with $(\mu_{\theta^\star}=\mu,\sigma_{\theta^\star}=\sigma)$ (the functions of our structural dynamics model) such that for every $u\in\mathcal U_{\mathrm{adm}}$
\begin{align*}
dX^{(u,\mathrm{pot})}_t=\mu_{\theta^\star}(X^{(u,\mathrm{pot})}_t,u_t)\,dt+\sigma_{\theta^\star}(X^{(u,\mathrm{pot})}_t, u_t)\,dW_t
\end{align*}
is well-defined and $(\mu_{\theta^\star},\sigma_{\theta^\star})$ can be estimated consistently from the observational law $P_{X,U}$.
Importantly, we restrict $\mathcal U_{\mathrm{adm}}$ to controls for which the induced law $P^{(u)}$ satisfies the positivity condition.
\end{assumption}
\begin{remark}[Neural SDEs] In \emph{neural SDEs}, $\mu_\theta$ and $\sigma_\theta$ are parameterized by neural networks and fit $\theta$
from observational trajectories. For a rich enough model class and a consistent estimator, \cref{assum:sde_estimation_assumption} holds.
A sufficient condition that helps ensure that $P^{(u)}\ll P$ holds is that $\sigma\sigma^\top$ is independent of $u$. 
\end{remark}

We can now leverage \citet[Theorem 1]{ying2025causal} to identify the target value when considering deterministic plans for stochastic interventions assigning probability one to path $u$: 
\begin{proposition}[Identifiability] Under assumptions \cref{assum:fcr_formal,assum:positivity_formal,assum:sde_estimation_assumption} and the structural dynamics \cref{assum:mechanistic_model} such that $X = X^{(U, \mathrm{pot})}$ a.s.,
\begin{align*}
     \mathcal{J}(t_0, X^{(u,\mathrm{pot})},u):=
     \mathbb{E} \left[\nu (X^{(u,\mathrm{pot})}, u) \mid \mathcal{F}_{t_0}\right],
\end{align*}
is identifiable from the observational law. In particular, for
\begin{align*}
\nu (X^{(u,\mathrm{pot})},u) &= \int_{t_0}^{t_f} f(t, X_t^{(u,\mathrm{pot})}, u_t) dt + g(X_{t_f}^{(u,\mathrm{pot})}).    
\end{align*}
the value $\mathcal{J}(t_0, X^{(u,\mathrm{pot})},u)$ is identified. 
\end{proposition}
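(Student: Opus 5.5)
The plan is to reduce the statement to \citet[Theorem 1]{ying2025causal}, which identifies $\mathbb{E}[\nu(X^{(u,\mathrm{pot})},u)\mid\mathcal F_{t_0}]$ for a stochastic intervention under full conditional randomization, positivity and consistency. We apply it to the degenerate intervention that puts probability one on the deterministic path $u$. Three steps are needed: match the hypotheses of that theorem to \cref{assum:mechanistic_model,assum:fcr_formal,assum:positivity_formal,assum:sde_estimation_assumption}; invoke the theorem to write the target as a functional of the observational law $P_{X,U}$; and check that our specific $\nu$ (running plus terminal cost) is an admissible functional.

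\textbf{Step 1 (consistency).} Under \cref{assum:mechanistic_model} with Lipschitz and linear-growth coefficients, the controlled SDE has a pathwise unique strong solution driven by the same $(W,X_0)$. On the event $\{U_{[t_0,t_f]}=u\}$, the observed $X$ and $X^{(u,\mathrm{pot})}$ therefore solve the same equation, so they coincide a.s. This is the consistency condition $X=X^{(U,\mathrm{pot})}$ required by \citet{ying2025causal}. \Cref{assum:fcr_formal} is verbatim their full conditional randomization condition. \Cref{assum:positivity_formal}, i.e.\ $P^{(u)}\ll P$, gives their positivity for the Dirac intervention at $u$; \cref{assum:sde_estimation_assumption} builds this into $\mathcal U_{\mathrm{adm}}$.

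\textbf{Step 2 (identification).} Their Theorem 1 proceeds by a time discretization $t_0=s_0<\dots<s_K=t_f$ with mesh $\eta$. On each step one replaces the observed treatment by $u$ via a g-formula / importance-weighting argument, using $P^{(u)}\ll P$ to give meaning to the Radon--Nikodym density $dP^{(u)}/dP$. The confounding error per step is bounded by $\varepsilon(s_k,\eta)$, and this error sums to $\int\varepsilon(t,\eta)\,dt\to0$. The outcome is that $\mathbb{E}[h(X^{(u,\mathrm{pot})})\mid\mathcal F_{t_0}]=\mathbb{E}_P\bigl[\tfrac{dP^{(u)}}{dP}\,h(X)\mid\mathcal F_{t_0}\bigr]$ for bounded measurable $h$ on path space. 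The right-hand side is a functional of $P_{X,U}$ only.

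\textbf{Step 3 (extension to $\nu$).} The cost $\nu$ is a measurable functional of the path, since $f,g$ are measurable and the paths are continuous, so $t\mapsto f(t,X_t,u_t)$ is measurable. We would extend from bounded $h$ by truncation and monotone convergence, assuming integrability of $\nu$ under $P^{(u)}$, which follows from moment bounds of the strong solution if $f,g$ have polynomial growth. As an alternative route, \cref{assum:sde_estimation_assumption} says $(\mu,\sigma)$ is consistently estimable from $P_{X,U}$. The law of the potential process, which is the unique strong solution with input $u$ and initial law $\mu_0$ (itself identified as the law of $X_0$), is then a function of the observational law, and so is any integral against it.

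\textbf{Main obstacle.} The delicate point is the degeneracy of deterministic plans. The event $\{U=u\}$ typically has $P$-probability zero, so naive conditioning on it is ill-defined. This is exactly why we rely on \cref{assum:positivity_formal} and the $\eta$-limiting argument of \citet{ying2025causal}, rather than on conditioning. Failing that, we would fall back on the structural route through \cref{assum:sde_estimation_assumption}, which sidesteps this issue entirely.
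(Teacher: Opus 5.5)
Your main line is the paper's argument. The paper proves this proposition only by invoking \citet[Theorem 1]{ying2025causal} for the stochastic intervention that puts mass one on $u$, and under the stated hypotheses that reduction goes through. (Consistency $X=X^{(U,\mathrm{pot})}$ is part of the statement, so the consistency derivation in Step 1 is not needed.)

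What is wrong is your ``main obstacle'' paragraph, which gets positivity backwards. By definition $P^{(u)}$ is the law of $(X^{(u,\mathrm{pot})}_{[t_0,t_f]},u_{[t_0,t_f]})$. It is therefore concentrated on $\{U_{[t_0,t_f]}=u\}$, which is an event for c\`adl\`ag treatment paths. Hence \cref{assum:positivity_formal} forces $P(U_{[t_0,t_f]}=u)>0$. Positivity does not rescue the case where $\{U=u\}$ is $P$-null; it excludes it, and when it holds, conditioning on $\{U=u\}$ is well defined. The $\eta$-limit in \cref{assum:fcr_formal} controls confounding over a continuum of decision times, not conditioning on null sets. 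So for a plan that is not observed with positive probability, Ying's theorem is simply not applicable, contrary to what you assert.

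There are three smaller points.

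First, in Step 2, if $P^{(u)}$ is the true interventional law (the paper's definition), then $\mathbb{E}_P[\tfrac{dP^{(u)}}{dP}h(X)]$ is a tautology, not yet a functional of $P_{X,U}$. The identifying content of the theorem is that, under \cref{assum:fcr_formal} and consistency, the interventional expectation can be rewritten through observational conditional laws. Concretely, this is a g-computation measure that keeps the observational conditional state dynamics and replaces the treatment mechanism by $\delta_u$.

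Second, your conditional identity needs the Bayes normalization $\mathbb{E}_P[Zh\mid\mathcal F_{t_0}]/\mathbb{E}_P[Z\mid\mathcal F_{t_0}]$ with $Z=dP^{(u)}/dP$. Already for $h\equiv1$, your right-hand side is $\mathbb{E}_P[Z\mid\mathcal F_{t_0}]$, which vanishes on $\{U_{t_0}\neq u_{t_0}\}$, whereas the left-hand side is $1$.

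Third, your structural fallback via \cref{assum:sde_estimation_assumption} is sound in spirit. It uses neither \cref{assum:fcr_formal} nor \cref{assum:positivity_formal}, since that assumption already postulates that $(\mu,\sigma)$ is recoverable from $P_{X,U}$. As written, however, it only identifies the unconditional law of $X^{(u,\mathrm{pot})}$. To reach the conditional target you must add three ingredients:
\begin{enumerate}
\item the flow property;
\item $X^{(u,\mathrm{pot})}_{t_0}=X_{t_0}$, i.e.\ $u$ coincides with the realized treatment before $t_0$;
\item independence of the post-$t_0$ Brownian increments from $\mathcal F_{t_0}$.
\end{enumerate}
Together these give $\mathcal J(t_0,X_{t_0},u)$ with identified coefficients.
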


In the following, we denote by $X^{(t_0,x_0,u)}$ the (strong) solution of \cref{eq:governing_sde} on $I$ under a deterministic treatment plan $u$ and initial condition $X_{t_0}=x_0$, whose existence and uniqueness are ensured by the regularity conditions in \Cref{app:identifiability}.
In particular, the induced state process of \Cref{eq:governing_sde} satisfies the flow property: for any $\tau \geq t_0$,
$X_t^{(t_0,x_0,u)} = X_t^{(\tau,\,X_\tau^{(t_0,x_0,u)},\,u)}$ a.s.,
equivalently meaning that $X$ is Markov: conditioned on the current state (and the future control values), the future evolution is independent of the history. Thus, instead of conditioning on the full information $\mathcal{F}_{t_0}$ it suffices to condition on $(X_{t_0}, U_{t_0})$.
Under \cref{assum:mechanistic_model,assum:fcr_formal,assum:positivity_formal}, we can rewrite the treatment-optimization problem in \Cref{eq:treatment_optimization} as a stochastic control problem over the class of admissible controls $\mathcal{U}_{\mathrm{adm}}$ and finite time horizon $[t_0, t_f]$ as
\begin{align}
u^* \!:=\!\argmin_{u \in \mathcal{U}_{\mathrm{adm}}}\underbrace{
\mathbb{E}\!\left[ \int_{t_0}^{t_f} \!\!\!f\bigl(t,X^{(t_0,x_0,u)}_t\!, u_t\bigr) \mathrm{d}t 
    \!+\! g\bigl(X^{(t_0,x_0,u)}_{t_f}\bigr) \right]\!.\!\!}_{=: \mathcal{J}(t_0, X^{(t_0,x_0,u)}, u).}
\label{eq:optimization_rephrased_as_OC}
\end{align}
We refer to \Cref{app:identifiability} for regularity details on the set of admissible controls and functions $f,g$. %

\subsection{Conservative Optimization of Treatment Plans} \label{sec:conservative_optimization}

The expected cost of a candidate treatment plan in \Cref{eq:optimization_rephrased_as_OC} cannot be evaluated directly because the structural dynamics governing \(X^{(t_0,x_0,u)}\) are unknown. We therefore learn an approximate dynamics model and use it to compute the expected cost of a treatment plan under the learned dynamics. This model-based objective is ultimately not the quantity of interest: the goal is to find treatment plans that minimize the expected cost under the true dynamics. Directly optimizing the model-based objective can exploit model error and find suboptimal plans under the true dynamics. We address this and derive a tractable upper bound on the true expected cost \(\mathcal J(t_0,X^{(t_0,x_0,u)},u)\) of a plan $u$. We propose a consistent estimator for the upper bound and optimize treatment plans by minimizing it. To the best of our knowledge, such a conservative bound -- together with a consistent estimator -- has not previously been established for the continuous-time setting.

\textbf{Estimated Dynamics Model.}
We approximate the dynamics from observational data $\mathcal{D}$ and denote by $\hat{X}^{(t_0,x_0,u)}$ the model-implied state process under treatment plan $u$ and initialized at $\hat{X}_{t_0} = x_0$. This allows us to evaluate a model-based surrogate cost $\mathcal{J}(t_0, \hat{X}^{(t_0,x_0,u)}, u)$. In principle, any identification method can be used, provided the chosen model class is rich enough to contain (or closely approximate) the structural dynamics model in \Cref{eq:governing_sde}. In our experiments, we instantiate the model class with a neural stochastic differential equation (neural SDE; see, e.g., \citet{kidger2022neuraldifferentialequations}). Specifically, we parameterize drift and diffusion by neural networks
$\mu_\theta:\mathbb R^{d_x}\times A\to\mathbb R^{d_x}$ and
$\sigma_\theta:\mathbb R^{d_x}\times A\to\mathbb R^{d_x\times d_w}$,
and define $\hat X$ as the strong solution of
\begin{align}\label{eq:learned_sde}
    \dif \hat X_t
    = \mu_\theta(\hat X_t,U_t)\,\dif t
    + \sigma_\theta(\hat X_t,U_t)\,\dif W_t,
\end{align}
where $W$ is a $d_w$-dimensional Brownian motion. To generate the rollout $\hat X^{(t_0,x_0,u)}$, we set $U_t=u_t$ and initialize the SDE at $\hat X_{t_0}=x_0$.

\textbf{Decomposition of Model Bias.}
To derive a conservative bound on the objective, we first relate the true cost of a treatment plan $u$ and the estimated cost under the learned model. We decompose the error $\Delta(t_0,x_0,u) := \mathcal{J}(t_0, X^{(t_0,x_0,u)}, u) - \mathcal{J}(t_0, \hat{X}^{(t_0,x_0,u)}, u)$ over a temporal partition $t_0 < t_1 < \cdots < t_K = t_f$ and express it as a telescoping sum of conditional one-step discrepancies on shorter time intervals. We comment later on further motivation of the decomposition in shorter time intervals. 

\begin{lemma}[Value Error Telescope]\label{lemma:telescope_lemma}
For an admissible control $(u: I \to \mathbb{R}^{d_u}) \in \mathcal{U}_{\mathrm{adm}}$ and initial condition $x_0$, 
\begin{align*}
\Delta(t_0,x_0,u)= \sum_{j=0}^{K-1} \mathbb{E}_{x \sim P_{\hat{X}_{t_j}}} [D_j(x,u)] 
\end{align*}
with one-step discrepancy on $[t_j, t_{j+1}]$ between true and learned dynamics given by 
\begin{align*}
    D_j(x,u) &= \mathbb{E}_{\hat{x}_{[t_j, t_{j+1}] \sim P_{\hat{X}_{[t_j,t_{j+1}]} \mid \hat{X}_{t_j} = x}}} \left[h_j(\hat{x}_{[t_j,t_{j+1}]})\right]\\
 &-\, \mathbb{E}_{x_{[t_j, t_{j+1}] \sim P_{X_{[t_j,t_{j+1}]} \mid X_{t_j} = x}}} \left[h_j(x_{[t_j,t_{j+1}]})\right]     
\end{align*} 
and $h_j$ is the deterministic path functional
\begin{align*}
x_{[t_j,t_{j+1}]} \mapsto \int_{t_j}^{t_{j+1}}\! f(t, x_{s}, u_s)\dif s + \mathcal{J} (t_{j+1}, x_{t_{j+1}}, u)
\end{align*}
mapping a deterministic path segment $x_{[t_j,t_{j+1}]}$ to the cumulative cost on the current sub interval $[t_j, t_{j+1}]$ plus the continuation value at $t_{j+1}$.
\end{lemma}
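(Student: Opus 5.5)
The plan is to use the continuous-time analogue of the simulation lemma from model-based RL: build a chain of \emph{hybrid} processes interpolating between the learned and the true dynamics, and telescope. For $j\in\{0,\dots,K\}$ let $\tilde X^{(j)}$ be the process that starts at $x_0$ at time $t_0$, follows the learned SDE \cref{eq:learned_sde} (with $U_t=u_t$) on $[t_0,t_j]$, and from time $t_j$ onward follows the true SDE \cref{eq:governing_sde}. It is initialised at the random state $\hat X_{t_j}$, with an independent Brownian motion. Strong existence and uniqueness for both SDEs gives well-definedness. Set
\[
V_j:=\mathbb{E}\Big[\int_{t_0}^{t_f} f(t,\tilde X^{(j)}_t,u_t)\,\dif t+g(\tilde X^{(j)}_{t_f})\Big].
\]
Then $V_0=\mathcal J(t_0,X^{(t_0,x_0,u)},u)$ and $V_K=\mathcal J(t_0,\hat X^{(t_0,x_0,u)},u)$. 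Hence the cost gap is $\sum_{j=0}^{K-1}(V_j-V_{j+1})$, up to the sign convention discussed below.

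Next I rewrite each $V_j$ through the continuation value. The true process has the flow/Markov property stated after the identifiability proposition. Conditioning on $\tilde X^{(j)}_{t_{j+1}}$ and using the tower property gives, for $j\le K-1$,
\[
V_j=\mathbb{E}\Big[\int_{t_0}^{t_j} f(t,\hat X_t,u_t)\,\dif t\Big]+\mathbb{E}_{x\sim P_{\hat X_{t_j}}}\Big[\mathbb{E}_{x_{[t_j,t_{j+1}]}\sim P_{X_{[t_j,t_{j+1}]}\mid X_{t_j}=x}}\big[h_j(x_{[t_j,t_{j+1}]})\big]\Big].
\]
Here the continuation value $\mathcal J(t_{j+1},\cdot,u)$ inside $h_j$ is the true one.

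For $V_{j+1}$ the same true continuation value appears from $t_{j+1}$ onward, but the segment $[t_j,t_{j+1}]$ is now generated by the learned dynamics. By the Markov property of the learned SDE, I condition on $\hat X_{t_j}$ and obtain the identical expression with the inner law replaced by $P_{\hat X_{[t_j,t_{j+1}]}\mid \hat X_{t_j}=x}$. The prefix term $\int_{t_0}^{t_j}$ is common to $V_j$ and $V_{j+1}$, since both follow the learned model on $[t_0,t_j]$, so it cancels. What remains of $V_j-V_{j+1}$ is exactly the expectation over $x\sim P_{\hat X_{t_j}}$ of the difference of the two inner expectations of $h_j$, i.e.\ $\pm D_j(x,u)$. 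Summing over $j$ yields the claimed telescope.

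The main obstacle is bookkeeping rather than analysis, in two places.

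\emph{Integrability.} Every expectation must be finite so that Fubini and the tower property apply and the cancellation is legitimate. I would take this from the growth conditions on $f,g$ and the moment bounds for strong solutions in \Cref{app:identifiability}.

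\emph{Sign convention.} With $D_j$ defined as ``learned minus true'' and the outer law being $P_{\hat X_{t_j}}$, the computation above gives $\sum_j\mathbb{E}[D_j]=V_K-V_0$, i.e.\ learned cost minus true cost. I would therefore check carefully whether $\Delta$ must be read with the opposite orientation, or whether $D_j$ carries a global minus sign. Either way the structural identity is what the later upper bound uses: an IPM bound applied to $|D_j|$ is insensitive to this sign.

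Finally, I would verify that conditioning on $X_{t_j}=x$ rather than on the full history is justified by the flow property. This is the step where \cref{assum:mechanistic_model} with a deterministic plan $u$ is used.
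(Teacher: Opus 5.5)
Your argument is essentially the paper's proof: the same hybrid processes $\tilde X^{(j)}$ (learned dynamics on $[t_0,t_j]$, true dynamics afterwards, with $\tilde X^{(0)}=X$ and $\tilde X^{(K)}=\hat X$), the same telescoping sum, and the same conditioning at $t_{j+1}$ and $t_j$, which exposes $h_j$ with the true continuation value while the common prefix cancels. Your sign check is also justified: $\tilde X^{(j)}$ follows the true dynamics on $[t_j,t_{j+1}]$, so $\tilde{\mathcal J}^{(j)}-\tilde{\mathcal J}^{(j+1)}=-\mathbb{E}_{x\sim P_{\hat X_{t_j}}}[D_j(x,u)]$, whereas the paper's step (b) silently swaps the order of the two inner expectations; the identity as stated therefore holds only up to a global sign, which is harmless for the subsequent bound on $|D_j|$.
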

The lemma extends related results from discrete \citep{Luo2019AlgorithmicFramework} to the continuous-time setting; the detailed proof can be found in \Cref{app:proof_telescope}. 

With \Cref{lemma:telescope_lemma} we can upper bound the true costs $\mathcal{J}(t_0,X^{(t_0,x_0,u)},u)$ under a candidate control $u$ via
\begin{align*}
 \mathcal{J}(t_0, X^{(t_0,x_0,u)}, u)  &\leq \mathcal{J}(t_0, \hat{X}^{(t_0,x_0,u)}, u) \\
 &\phantom{{}={}} + \sum_{j=0}^{K-1} \mathbb{E}_{x \sim P_{\hat{X}_{t_j}}} [\lvert D_j(x,u) \rvert] .
\end{align*}
This suggests that a treatment plan with low model-based cost and small accumulated one-step discrepancies will also have low cost under the true dynamics. The difficulty is that $D_j(x,u)$ depends on the true conditional law and on the continuation value under the true dynamics. To obtain a data-support-aware bound that depends only on distributional differences between the true and learned dynamics, we upper bound each one-step discrepancy by an integral probability metric (IPM) between the corresponding conditional path laws (cf.\ the discrete-time argument in \citep{yu2020mopo}). For any function class $\mathcal{G}\subseteq \mathcal{L}^{0}\!\bigl((\mathbb{R}^{d_x})^{[t_j,t_{j+1}]},\mathbb{R}\bigr)$ with $h_j\in\mathcal{G}$, the one-step discrepancy satisfies
\begin{align*}
\lvert D_j(y, u) \rvert \leq \mathrm{IPM}_\mathcal{G}[P_{X_{[t_j,t_{j+1}]} | X_{t_j} = y}, P_{\hat{X}_{[t_j,t_{j+1}]} | \hat{X}_{t_j} = y}]
\end{align*}
where $\mathrm{IPM}_{\mathcal{G}}(P,Q)\coloneqq \sup_{g\in\mathcal{G}}|\E_P[g]-\E_Q[g]|$ is the integral probability metric generated by $\mathcal{G}$.
Common choices for $\mathcal{G}$ in machine learning include the following (see \cref{app:reminder_ipm} for details):
\begin{enumerate}[leftmargin=*,itemsep=-2pt,topsep=0pt]
    \item[(i)] The \defined{total variation distance} is induced by $\mathcal{G}_{TV} = \{f \in \mathcal{L}^0(\mathcal{X}, \mathbb{R}) \mid \sup_{x \in \mathcal{X}} |f(x)| \leq 1\}$;
    \item[(ii)] Given an RKHS $(H \subseteq \mathbb{C}^{\mathcal{X}}, \langle \cdot, \cdot \rangle_{H}, k: \mathcal{X} \times \mathcal{X} \to \mathbb{C})$, the IPM for $\mathcal{G}_H = \{f \in H \mid \|f\|_{H} \leq 1\}$ is called \defined{maximum mean discrepancy (MMD)}, denoted $\mathrm{MMD}_k$;
    \item[(iii)] Given a metric space $(\mathcal{X}, d)$ ($\mathcal{A}_\mathcal{X} = \mathcal{B}(\mathcal{T}_d)$), the IPM for $\mathcal{G}_W = \{f \in L^0(\mathcal{X}, \mathbb{R}) \cap B(\mathcal{X}, \mathbb{R}) \mid \|f\|_L := \sup_{x \neq y} \frac{|f(x) - f(y)|}{d(x,y)} \leq 1\}$ is called \defined{earth mover} or \defined{Wasserstein} or \defined{$L^1$-distance}.
\end{enumerate}
Due to the Lipschitz-regularity assumptions on the structural dynamics model, the evolution over a finite time-horizon and the boundedness of objective functions (\Cref{app:identifiability}), we can assume there exists a $c \in \mathbb{R}_{>0}$ s.t.\ $\tfrac{h_j}{c} \in \mathcal{G}$, and thus
\begin{align*}
    \lvert D_j(y, u) \rvert \leq c \cdot \mathrm{IPM}_{\mathcal{G}}[P_{X_{[t_j,t_{j+1}]} | X_{t_j} = y}, P_{\hat{X}_{[t_j,t_{j+1}]} | \hat{X}_{t_j} = y}].
\end{align*}

\textbf{Conservative Objective.} From this, we can alter our optimization objective to a \emph{conservative} ``out-of-distribution aware'' upper-bound objective 
\begin{align}\label{eq:conservative_objective}
&\min_{u \in \mathcal{U}_{\mathrm{adm}}} 
\mathbb{E}\!\left[
    \int_{t_0}^{t_f} f\bigl(t,\hat{X}^{(t_0,x_0,u)}_t, u_t\bigr)\, \mathrm{d}t 
    + g\bigl(\hat{X}^{(t_0,x_0,u)}_{t_f}\bigr)
\right]\\ 
 &+ \lambda \sum_{j=0}^{K-1}  \mathbb{E}\left[ \mathrm{IPM}_{\mathcal{G}}[P_{X_{[t_j,t_{j+1}]} \mid X_{t_j} = x_{t_j}}, P_{\hat{X}_{[t_{j},t_{j+1}]} \mid \hat{X}_{t_j} = x_{t_j}}]\right]\nonumber
\end{align}
where $\lambda > 0$ is a hyperparameter controlling the degree of conservatism.

\textbf{Signature Kernel-Based Estimator.}
As we cannot access the true conditional path distributions, one possibility to describe the MMD between $P_{X \mid u,x_0}$ and $P_{\hat{X} \mid u,x_0}$ is via the maximum conditional mean discrepancy (MCMD) \citep{park2020measure}, through conditional mean embeddings (CME) in a suitable reproducing kernel Hilbert space (RKHS), The MCMD has the advantage of working purely on RKHS embeddings (e.g., avoid density estimation) and offering a consistent (regularized) kernel-ridge-type estimator \citep{park2021conditional} that is tractable purely from observational data (see \cref{app:reminder_ipm} for details). Since $X$ is a path, the choice of kernel must be defined on path space. A natural choice for the kernel can be constructed via the signature transform, which is a feature map $\mathcal{S} : C_p(I, \mathbb{R}^{d}) \rightarrow H_\mathcal{S}(\mathbb{R}^{d})$ that converts a continuous path $x \in C_p(\mathbb{R}^{d})$ of bounded variation into a graded sequence of iterated integrals 
\begin{align*}
    \mathcal{S}(x)\!=\! & \left( 1, \mathcal{S}^{(1)}(x), \ldots, \mathcal{S}^{(k)}(x), \ldots \right)\! \! \in \! \prod_{k=0}^{\infty}\!\left( \mathbb{R}^{d}\right)^{\otimes k},\\
    \mathcal{S}^{(k)}(x)& = \int_{t_0 < t_1 < \ldots < t_k<t_f } \!\!\!\!\!\!\!\!dx_{t_1} \otimes \ldots \otimes d x_{t_k} \in \left( \mathbb{R}^{d}\right)^{\otimes k}
\end{align*}
that form an expressive representation of the path in the Hilbert space $\prod_{k=0}^{\infty}\!\left( \mathbb{R}^{d}\right)^{\otimes k}\!\!\!\!=: \!H_\mathcal{S}(\mathbb{R}^{d})$ and capture the order and interaction of its increments. An analogy is often drawn to monomials: just as polynomials are dense in $C(K)$ on compact $K\subset\mathbb{R}^d$ by Stone–Weierstraß, finite linear combinations of signature coordinates are dense in the continuous functionals on suitable compact sets of paths, so the signature gives a universal feature map for path-valued data.
Equipped with the canonical Hilbert space structure on $H_\mathcal{S}(\mathbb{R}^{d})$, the signature feature map induces the signature kernel
\begin{align*}
    k_{\mathrm{sig}}(x,y):=\langle \mathcal{S}(x),\mathcal{S}(y)\rangle_{H_\mathcal{S}(\mathbb{R}^{d})},
\end{align*}
s.t.\ the CME of the conditional distributions $P_{X \mid U}$, can be computed via $\mu_{P_{X\mid U}}:=\mathbb{E}[k_{\mathrm{sig}}(X,\cdot) \mid U]$.
This establishes the signature kernel MMD $\mathrm{MMD}_{k_{\mathrm{sig}}}(P_X,P_Y)=\lVert\mu_{P_X}-\mu_{P_Y}\rVert_{H_\mathcal{S}(\mathbb{R}^{d})}$ that we can use for estimating the IPM in our conservative objective. The full formal estimator is given in \Cref{app:method_details}. Instead of explicitly computing higher-order signature features directly, the signature kernel can be more efficiently evaluated by solving a Goursat-PDE, see \citep{salvi_signature_2021, cass2024lecture}.

\textbf{On the Partition of the Time Interval.}
Our results are stated for an arbitrary partition $t_0<t_1<\dots<t_K=t_f$ of the horizon $I$. For the objective in \Cref{eq:optimization_rephrased_as_OC}, one may simply take $K=1$ and optimize over the full interval (we do so; details in \Cref{sec:experimental_setup}).
We nonetheless keep the partitioned form because it aligns with standard numerical practice in optimal control.
In particular, multi-shooting splits the horizon into shorter intervals and optimizes over the resulting segments, which often improves stability for complex dynamics over long time-horizons, compared to single shooting \citep{gros2020numerical}.
Our formulation supports this through the choice of $K>1$.
The same segment-wise viewpoint also connects naturally to offline model-based RL, where long rollouts under a learned model can amplify error and motivate local control of discrepancies, for instance via adaptive rollout truncation \citep{koprulu2025neural,zhang2023uncertainty}.

\begin{figure*}[!t]
  \centering
  \includegraphics[width=\textwidth]{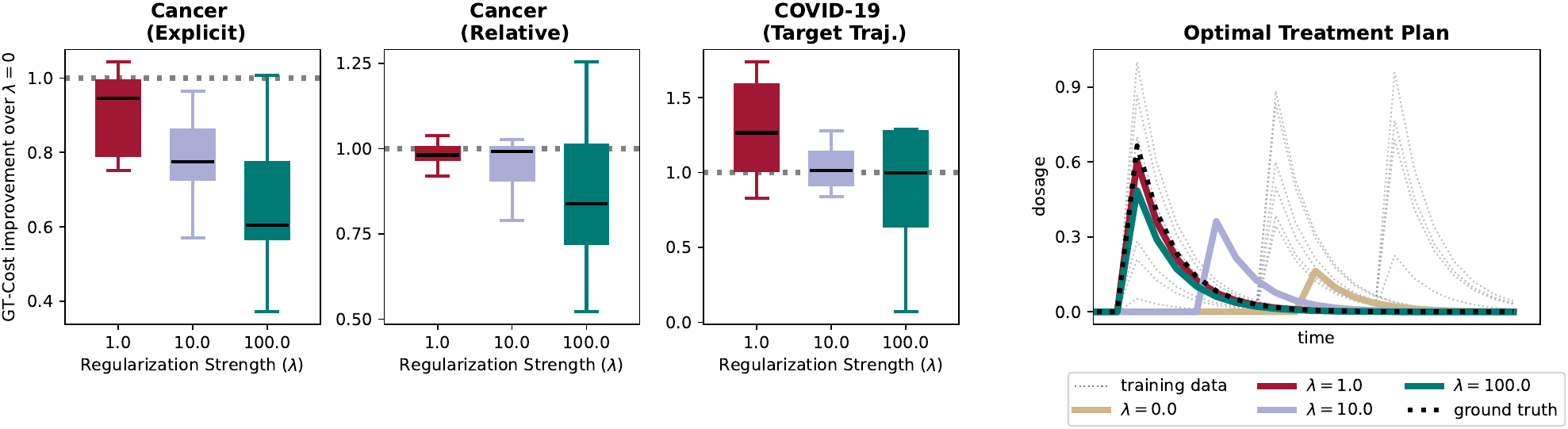}
  \caption{ \textbf{Effect of increasing regularization strength $\lambda$}. (Left) Improvement of the ground truth costs for $\lambda = \{1, 10, 100\}$ relative to the ground truth costs with no regularization, i.e., $\lambda=0$ on three optimization problems across $15$ different initial conditions (Cancer (Explicit), Cancer (Relative), Covid-19 (Target Trajectory). Boxplots show medians and interquartile ranges (IQR). Whiskers extend to the farthest point within $1.5 \times \text{IQR}$ from boxes. (Right) Example of predicted optimal treatment plans for $\lambda \in \{0, 1, 10, 100 \}$ together with treatment plans seen during training and ground truth optimal treatment plan.}
  \label{fig:varying_lambda}
\end{figure*}

\begin{table*}[t!]
  \caption{\textbf{Benchmarking the optimized treatment plans on all three tasks; Cancer (Explicit), Cancer (Relative) and Covid-19 (Target Trajectory).}} \label{tab:result_table}
  \label{tab:metrics}
  \centering
  \small
  \renewcommand{\arraystretch}{1.0}
  \begin{subtable}[t]{0.49\textwidth}
    \centering
    \caption{Average Spearman correlation between predicted costs and true costs of the candidate treatment plans in the control library. Higher values indicate better results. }
    \label{tab:spearman}
    \begin{tabular}{lccc}
      \toprule
      Method &
      \shortstack[c]{\textbf{Cancer}\\\footnotesize(Explicit)} &
      \shortstack[c]{\textbf{Cancer}\\\footnotesize(Relative)} &
      \shortstack[c]{\textbf{Covid-19}\\\footnotesize(Target Traj.)} \\
      \midrule
      TECDE &  0.29$\pm$ 0.01 & -0.30$\pm$ 0.10 & 0.79 $\pm$ 0.34 \\
      \makecell[l]{INSITE/\\ SINDy} & -0.77 $\pm$ 0.00 & -0.78 $\pm$ 0.00 & 0.43 $\pm$ 0.35 \\
      \makecell[l]{\textbf{Ours}\\(rank)} & 0.40 $\pm$ 0.0 & 0.23$\pm$0.02 & 0.51 $\pm$ 0.24\\
      \bottomrule
    \end{tabular}
  \end{subtable}
  \hfill
  \begin{subtable}[t]{0.49\textwidth}
    \centering
    \caption{Average true costs of the optimized treatment plans. For our approach, we use a regularization strength of $\lambda=100$. Lower values indicate better results.}
    \label{tab:cost}
    \setlength{\tabcolsep}{4pt}
    \begin{tabular}{lccc}
      \toprule
      Method &
      \shortstack[c]{\textbf{Cancer}\\ (Explicit)} &
      \shortstack[c]{\textbf{Cancer}\\\footnotesize(Relative)} &
      \shortstack[c]{\textbf{Covid-19}\\\footnotesize(Target Traj.)} \\
      \midrule
       TECDE    & 467.26 $\pm$ 726.64 & 280.97 $\pm$ 414.76 & 5.80 $\pm$ 2.55 \\
      \makecell[l]{INSITE/\\ SINDy} & 559.74 $\pm$ 824.42 & 333.77 $\pm$ 490.53 & 7.59 $\pm$ 3.83 \\
      \makecell[l]{\textbf{Ours} \\ ($\lambda=100$)}    & 242.07 $\pm$ 314.60 & 73.42 $\pm$ 108.34 & 5.61 $\pm$ 3.79 \\
      \bottomrule
    \end{tabular}
  \end{subtable}

  \vspace{-0.1in}
\end{table*}
\section{Experimental Setup} \label{sec:experimental_setup}

We evaluate and benchmark our treatment optimization approach on simulated, irregularly sampled longitudinal patient trajectories generated from state-of-the-art pharmacokinetic–pharmacodynamic (PK/PD) models. This is a standard approach in longitudinal treatment-effect estimation and allows us to evaluate ground-truth outcomes under optimized treatment plans.

\textbf{Data.} We use the lung cancer tumor growth simulator of \citet{geng2017prediction}, which models the combined effects of chemotherapy and radiotherapy and is widely used for longitudinal treatment-effect estimation \citep{kacprzyk2024ode,seedat2022continuous,hess2023bayesian, hess2025stabilized}. We randomly select between two clinically motivated treatment plans \citep{curran2011sequential}: sequential therapy (three weeks of weekly chemotherapy followed by three weeks of radiotherapy) and concurrent therapy (biweekly joint chemo--radiotherapy). The state and control dimensions are $d_x=2$ and $d_u=2$ (tumor volume and drug concentration; chemotherapy and radiotherapy dosing). We simulate trajectories over a $60$-day horizon and induce irregular sampling by masking $30\%$ of measurements at random.

In addition, we adopt the Covid-19 disease progression model under dexamethasone treatment of \citet{qian2021integrating}, which is an extension of \citet{dai2021prototype}. We consider a single-intervention setting in a 14-day window, where dexamethasone is administered at one of three predefined time points with randomly sampled dosage $d \in [0,10]$. The control is the plasma dexamethasone concentration ($d_u=1$), and the state comprises viral load, innate immune response, adaptive immune response, and lung-tissue dexamethasone concentration ($d_x=4$). We simulate trajectories over a 14-day horizon and induce irregular sampling by masking $30\%$ of measurements at random.

\textbf{Treatment Optimization Objectives.} We consider a total of three treatment optimization objectives. In the cancer problem, we consider two treatment optimization objectives. The first objective (\emph{absolute target}) minimizes tumor volume at the terminal time $t_f$ while penalizing treatment intensity to discourage overly aggressive dosing. The second objective (\emph{relative target}) encourages achieving a fixed relative reduction in tumor volume (70\%) with respect to the initial volume, again trading off tumor control against treatment intensity. In the Covid-19 problem, we define a desired target trajectory that mirrors the average disease progression and optimize the treatment to track this trajectory (\emph{target trajectory}). We solve each optimization problem for $15$ randomly sampled initial conditions. Exact mathematical definitions of all objectives are provided in \Cref{app:treatment_optimization_objectives}.

\textbf{Practical Implementation.} Our proposed approach is modular, separating (i) learning a continuous-time dynamics model from observational trajectories and (ii) optimizing a conservative treatment objective under the learned dynamics. While any dynamics identification method whose model class contains the true structural dynamics could be used, we instantiate the dynamics with a neural SDE by parameterizing the drift and diffusion in \Cref{eq:governing_sde} with neural networks, and train it using an adaptation of the signature-kernel score method of \citet{issa2023non} that conditions on an initial state and control path (details in \cref{app:experimental_details}). For treatment optimization, we restrict to a parameterized family of deterministic open-loop controls $u_\phi:[t_0,t_f]\to\mathbb{R}^{d_u}$ (see \cref{app:control_parameterization}). In our settings, the parameters $\phi$ describe the timepoints and dosage values of our treatment (chemo- and radio-therapy for cancer, and Dexamethasone for Covid-19). We then compute optimized plans by gradient-based optimization over $\phi$: at each iteration we sample trajectories from the learned neural SDE under $u_\phi$, form a Monte-Carlo estimate of the (conservative) objective $\mathcal{J}(t_0,X^{(t_0,x_0, u)},u_\phi)$ with numerical integration, and backpropagate through the solver, corresponding to direct single-shooting approach in optimal control \citep{hoglund2023neural,gros2020numerical}. 

\textbf{Baselines.}
We benchmark against state-of-the-art continuous-time treatment effect estimators. Specifically, we implement TE-CDE \citep{seedat2022continuous} and INSITE \citep{kacprzyk2024ode}. TE-CDE is based on neural controlled differential equations and predicts potential outcomes by conditioning on the full observed history of covariates and treatments. INSITE learns an explicit dynamical model via sparse identification of nonlinear dynamics (SINDy) \citep{brunton2016discovering}. Implementation details and hyperparameter ranges are reported in \cref{app:experimental_details}. To ensure a fair comparison, all baselines are tuned on held-out validation data by selecting hyperparameters that minimize the mean square error of predictions.  

Since the baseline methods do not trivially extend to treatment optimization, we evaluate them using a control-library protocol \citep{seedat2022continuous, wang2025variational}. Concretely, for each optimization problem, we generate a library of $100$ candidate control functions by sampling from the same control parameterization used in our optimization procedure. Each baseline then predicts the state trajectory and the corresponding objective value for every candidate control, and selects the treatment plan with the lowest predicted objective.

\section{Experimental Results}
\textbf{Our approach finds better treatment plans compared to baselines.}
\Cref{tab:result_table} (left) reports Spearman rank correlation between predicted and ground-truth costs on a fixed library of candidate treatment plans. This measures whether a method preserves the cost-induced ordering of candidate plans (i.e., whether plans with low ground-truth cost are also assigned low predicted cost). For our method, we compute predicted costs by simulating the learned neural SDE under each candidate plan. On both cancer tasks, our approach yields the highest rank correlation, indicating a closer match to the ground-truth ordering; on the Covid-19 task, TE-CDE performs best, followed by our approach. Note that this evaluation does not involve optimizing over controls and therefore does not probe model-exploitation effects, since the candidate library remains close to the training distribution. \Cref{tab:result_table} (right) reports the ground-truth costs of the optimized treatment plans, evaluated under the true dynamics. Here, our method achieves the lowest average cost across all three tasks.

\begin{figure}[ht]
  \vskip 0.2in
  \begin{center}
    \centerline{\includegraphics[width=0.9\columnwidth]{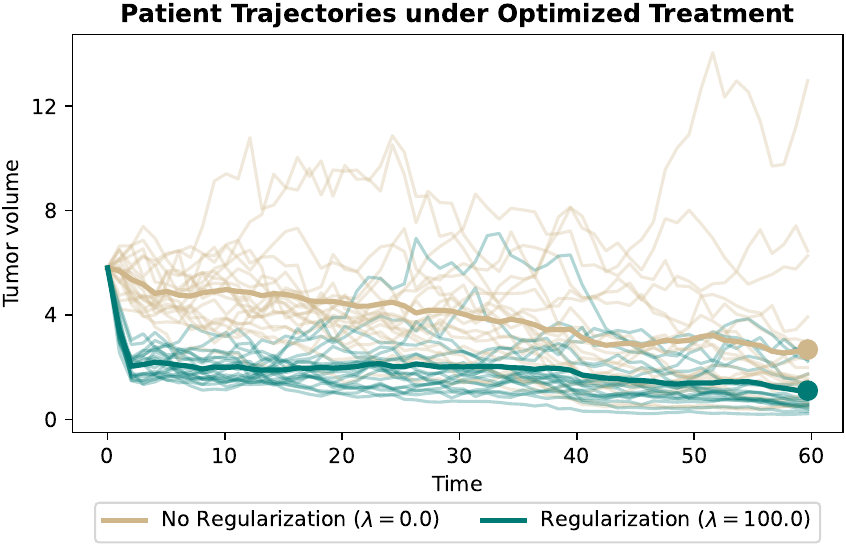}} \caption{\textbf{Example patient trajectories under treatment plans optimized with different conservatism levels ($\lambda$).} Tumor volume over time for two optimized treatment plans. Shaded regions indicate variability across simulated trajectories, the bold curve denotes the mean trajectory, and the dot marks the mean final tumor volume.}
    \label{fig:example_patient}
  \end{center}
\end{figure}
\textbf{Increasing regularization strengths $\lambda$ can improve treatment plan performance.} \Cref{fig:varying_lambda} investigates the effect of the conservative regularization strength $\lambda$ on treatment optimization. For each test instance, we optimize the control under the learned dynamics for $\lambda\in\{0,1,10,100\}$ and evaluate the resulting plan using the ground-truth simulator cost $\mathcal{J}(t_0,X^{(t_0,x_0,u)},u)$. We report the relative change in ground-truth cost compared to the unregularized solution ($\lambda=0$). On both cancer tasks, increasing $\lambda$ yields a consistent reduction in ground-truth cost, indicating that conservatism mitigates model exploitation during optimization. For the Covid-19 task, we do not observe a clear monotonic trend, but regularization does not degrade performance. A plausible explanation is that the optimized plans remain closer to the training distribution in this task, so distribution-shift effects are less pronounced. As expected, a large $\lambda$ can eventually dominate the objective and limit further improvements. The left panel of \Cref{fig:varying_lambda} shows an example of the optimized treatment plan in the Covid-19 task for different values of $\lambda$; in this instance, stronger regularization moves the solution closer to the ground-truth optimum. 
\Cref{fig:example_patient} shows the distribution of ground-truth outcome trajectories for a representative patient under the optimized treatment plans with $\lambda=0$ and $\lambda=100$. The conservative solution ($\lambda=100$) achieves a lower mean tumor-volume trajectory and final state, mirroring the cost gains from regularization observed in \Cref{fig:varying_lambda}. 

\section{Conclusion}

We introduced a conservative continuous-time framework for treatment optimization from irregularly sampled patient trajectories. We propose to model the patient dynamics as a controlled stochastic differential equation where the treatment is a continuous-time control input. To avoid model exploitation during treatment optimization, we derived a conservative objective that augments the model-based cost with a consistent signature-based MMD regularizer, which penalizes plans whose learned model rollout distribution deviates from the observational trajectory distribution via conditional MMD on path space. Experiments on pharmacokinetic–pharmacodynamic simulations show that our conservative objective can improve robustness and lead to better treatment plans under the true dynamics compared to the unregularized setting and baseline approaches.

\section*{Impact Statement} %

This paper develops technical methodology for conservative, continuous-time treatment optimization from irregularly sampled observational trajectories, with the goal of improving the reliability of planning under learned dynamics;
if translated responsibly, such tools could support better decision-making in high-stakes settings (e.g., proposing candidate dosing plans for further expert review, simulation, and prospective study) and may also benefit other domains that require robust control from logged data.
At the same time, there are meaningful risks: observational data can reflect selection effects, unmeasured confounding, missingness, and societal or clinical biases, and optimizing over learned models can still produce misleading or harmful recommendations if deployed without rigorous validation, appropriate constraints, and human oversight; additionally, clinical and behavioral data raise privacy and security concerns.
Our work is primarily methodological and does not constitute a deployable medical device or clinical guidance;
rather, it aims to advance broadly applicable machine learning foundations for reliable offline optimization, while explicitly emphasizing the importance of safety, uncertainty, and out-of-distribution robustness in any real-world use, particularly in healthcare.

\section*{Acknowledgements}

We thank Junhyung Park for his time and for helpful discussions about the paper and the MCMD estimator. We thank Cristopher Salvi for answering questions about the Signature Scores for neural SDE training. We thank Emilio Ferrucci for answering questions about causal identifiability, stochastic control, rough path theory, and for sharing useful pointers to the related literature. This work has been supported by the German Federal Ministry of Education and Research (Grant: 01IS24082).

\bibliography{ref}
\bibliographystyle{icml2026}

\newpage
\appendix
\crefalias{section}{appendix}
\crefalias{subsection}{appendix}
\onecolumn

\section{Summary of Notation} \label{app:notation}
A summary of notation is provided in \cref{tab:notation}. We write $u$ for a sample of the stochastic process $U$.
\begin{table}
\centering
\caption{Summary of notation}\label{tab:notation}
\smallskip
\rowcolors{1}{white}{gray!15}
{%
\begin{tabular}{c l }
\toprule
\textbf{Symbol} & \textbf{Meaning}\\
\midrule
$I$ & $I:=[t_s,t_f]$, the considered finite time horizon \\
$N$ & the number of training samples\\
$\mathcal{D}$ & the dataset $\mathcal{D}:= \{ (z^{(i)}, y^{(i)}, u^{(i)}) \}_{i \in [N]}$\\
$d_z$ & dimension of the stochastic covariate process $Z:\Omega \times I \rightarrow \mathbb{R}^{d_z}$\\ 
$d_y$ & dimension of the stochastic outcome process $Y:\Omega \times I \rightarrow \mathbb{R}^{d_y}$\\
$d_u$ & dimension of the stochastic treatment process $U:\Omega \times I \rightarrow \mathbb{R}^{d_u}$\\
$d_x$ & $d_x = d_z+ d_y$ dimension of $X = (Z,Y)^\top$\\
$\mathcal{M}_1 (\Omega, \mathcal{A})$ & the set of probability measures on measurable space $(\Omega, \mathcal{A})$\\
$\mathcal{Y}^{\mathcal{X}}$ & the set of all functions $\mathrm{Funct}(\mathcal{X}, \mathcal{Y}) = \lbrace f : \mathcal{X} \to \mathcal{Y} \rbrace$\\
$\mathcal{L}^{0}((\mathcal{X},\mathcal{A}_\mathcal{X}), (\mathcal{Y},\mathcal{A}_\mathcal{Y}))$ & the $\mathcal{A}_\mathcal{X}-\mathcal{A}_\mathcal{Y}$ measurable functions $f:\mathcal{X} \rightarrow \mathcal{Y}$\\
\bottomrule
\end{tabular}
}
\end{table}

\section{On Identifiability for Continuous-Time Treatment Effects}\label{app:identifiability}

In order to render the stochastic control problem from \Cref{eq:treatment_optimization} computationally tractable, we reformulated it as \cref{eq:optimization_rephrased_as_OC}.
In this section we (i) give regularity assumptions on the controlled dynamics that guarantee existence and uniqueness of the resulting state trajectory, (ii) specify the assumptions for \Cref{eq:treatment_optimization} to be identifiable, and (iii) narrow down the class of admissible treatment processes for the control problem to be well-defined.
Concretely, we model the patient state by a controlled SDE with measurable drift- and diffusion functions, satisfying the standard conditions (e.g., uniform Lipschitz continuity in the state, plus suitable integrability of the control) so that, for any initial condition $(t_0,x_0)$ and any admissible control $u$, the SDE admits a unique strong solution $X^{t_0,x_0,u}$ with finite objective.

\paragraph{Structural Dynamics Model.}
Let $I = [t_s,t_f],$, let $(\Omega,\mathcal{F},\mathbb F,P)$ be a complete filtered probability space with filtration $\mathbb F=(\mathcal F_t)_{t\in I}$ satisfying the usual conditions, $W=(W_t)_{t\in I}$ a $\mathbb{R}^{d_w}$-valued Wiener process adapted to $\mathbb F$,  $U=(U_t)_{t\in I}$ the progressively $\mathbb F$-measurable control-process taking values in $A\subseteq\mathbb{R}^{d_u}$ and 
\begin{align}\label{eg:drift_and_diffusion}
\mu:\mathbb{R}^{d_x}\times A\to\mathbb{R}^{d_x}, \qquad \sigma:\mathbb{R}^{d_x}\times A\to\mathbb{R}^{d_x\times d_w},
\end{align}
the measurable drift- and diffusion functions of the controlled SDE
\begin{align}\label{eq:controlled_sde_in_appendix}
dX_t=\mu(X_t,U_t)\,dt+\sigma(X_t,U_t)\,dW_t,\qquad t\in I. 
\end{align}
Assuming
\begin{enumerate}
    \item[(i)] uniform Lipschitz condition in the state, uniformly over treatments, meaning there exists $K\ge 0$ such that for all $x,y\in\mathbb{R}^{d_x}$ and all $u \in A$,
    \begin{align*}
        \|\mu(x,u)-\mu(y,u)\|+\|\sigma(x,u)-\sigma(y,u)\|\le K\|x-y\|.
    \end{align*}
    \item[(ii)] the set of control processes are restricted such that 
    \begin{align*}
        \mathbb{E} \left[ \int_{t_0}^{t_f} \lVert \mu(0,u_t) \rVert^{2} + \lVert \sigma (0,u_t)\rVert^{2} \dif t \right] < \infty
    \end{align*}
\end{enumerate}
the controlled SDE \cref{eq:controlled_sde_in_appendix} admits a unique strong solution for every initial condition $(t_0,x_0)\in I\times\mathbb{R}^{d_x}$, for any progressively $\mathbb{F}$-measurable control process $U$, which we denote by $X^{(t_0,x_0,U)}=(X_t^{(t_0,x_0,U)})_{t\in I}$. For more details see \citet{pham_continuous-time_2009}.

\paragraph{The Class of Admissible Treatments.}
Assuming a controlled SDE $(\mu, \sigma, I)$ as in \cref{eq:controlled_sde_in_appendix} and measurable functions $f: I \times \mathbb{R}^{d_x} \times A \rightarrow \mathbb{R}$ and $g: \mathbb{R}^{d_x} \rightarrow \mathbb{R}$ s.t.
\begin{enumerate}
    \item[(i)] $g$ is lower-bounded
    \item[(ii)] $\exists C > 0: |g(x)| \leq C(1 + |x|^2) \quad \forall x \in \mathbb{R}^n$
\end{enumerate}
the \defined{set of admissible controls} for $(t_0, x_0) \in I \times \mathbb{R}^{d_x}$,  
\begin{align*}
\mathcal{U}_{\mathrm{adm}}:=\mathcal{U}_{(t_0,x_0)}^f := \left\lbrace u \mid \mathbb{E}\left[\int_{t_0}^{t_f} |f(s, X_s^{(t_0,x_0,u)}, u_s)|ds\right] < \infty \right\rbrace \neq \emptyset
\end{align*}
is non-empty, making the \defined{SDE-governed control problem}
\begin{align*}
    v(t_0,x_0) = \arginf{u\in \mathcal{U}_{\mathrm{adm}}} \mathcal{J}(t_0, x_0, u), \quad \mathcal{J}(t_0, x_0, u) = \mathbb{E}\left[\int_{t_0}^{t_f} f(s, X_s^{(t_0,x_0,u)}, u_s) \dif s + g\left(X_{t_f}^{(t_0,x_0,u)}\right)\right]
\end{align*}
well-defined, where $\mathcal{J}(t_0, x_0, \alpha)$ is the \defined{cost function} and $v(t_0,x_0)$ the \defined{associated value function}.

\section{Details on Our Proposed Approach} \label{app:method_details}
\subsection{Proof of the telescope lemma} \label{app:proof_telescope}
In this subsection, we prove \cref{lemma:telescope_lemma}, wlog assuming $I=[0,1]$ for notational convenience. Note:
\begin{remark}Given an admissible $u \in \mathcal{U}$ we have by the path-wise uniqueness of the flow of the SDE for $X$, due to the Markovian structure of the SDE:
\begin{align*}
X_t^{(t_0,x_0,u)} = X_t^{(\tau,X_{\tau}^{t_0,x_0,u},u)}, \quad t \geq \tau
\end{align*}
For any stopping time $\tau$ valued in $[t_0,1]$. Hence we obtain:
\begin{align}
\mathcal{J}(t_0,x_0,u) &= \mathbb{E}\left[\int_{t_0}^{\tau} f(s, X_s^{(t_0,x_0,u)}, u_s) ds + \int_{\tau}^{1} f(s, X_s^{(t_0,x_0,u)}, u_s) ds + g(X^{(t_0,x_0,u)}_1)\right] \\
&\stackrel{(*)}{=} \mathbb{E}\left[\mathbb{E}\left[\int_{t_0}^{\tau} f(s, X_s^{(t_0,x_0,u)}, u_s) ds + \int_{\tau}^{1} f(s, X_s^{(t_0,x_0,u)}, u_s) ds + g(X^{(t_0,x_0,u)}_1) \mid \mathcal{F}_{\tau}\right]\right]  \\
&\stackrel{(**)}{=} \mathbb{E}\left[\int_{t_0}^{\tau} f(s, X_s^{(t_0,x_0,u)}, u_s) ds + \mathbb{E}\left[\int_{\tau}^{1} f(s, X_s^{(t_0,x_0,u)}, u_s) ds + g(X^{(t_0,x_0,u)}_1) \mid \mathcal{F}_{\tau}\right]\right] 
\end{align}
where in $(*)$ we used the tower property of the conditional expectation and in $(**)$ we used that $\tau$ a stopping time making $\int_{t_0}^{\tau} f(s, X_s^{t_0,x_0,u}, u_s) ds$ $\mathcal{F}_{\tau}$ measurable. Moreover, by shifting control and Brownian motion by $-\tau$, in the underlying controlled SDE, we obtain:
\begin{align*}
\mathbb{E}\left[\int_{\tau}^{1} f(s, X_s^{(t_0,x_0,u)}, u_s) ds + g(X^{(t_0,x_0,u)}_1) \mid \mathcal{F}_{\tau}\right] = \mathcal{J}(\tau, X_{\tau}^{(t_0,x_0,u)}, u) \quad P-\text{a.s.}
\end{align*}
such that
\begin{align*}
\mathcal{J}(t_0, x_0, u) = \mathbb{E}\left[\int_{t_0}^{\tau} f(s, X_s^{(t_0,x_0,u)}, u_s) ds + \mathcal{J}(\tau, X_{\tau}^{(t_0,x_0,u)}, u)\right]
\end{align*}
\end{remark}

\begin{proof} 
For $j \in \{0,\ldots, K-1$, define the stochastic process $\tilde{X}^{(j)}$ that evolves up to time $t_j$ according to (estimated) SDE governed by $\mu_\theta, \sigma_\theta$ and starting from $t_j$ according to the true SDE with$\mu, \sigma$, meaning $\tilde{X}^{(i)} = (\tilde{X}_s^{(i)})_{s \in I}$ satisfies
\begin{align}
\tilde{X}_{[0,t_j]}^{(j)} &= \text{solution of } \quad x_0 + \int_{0}^{t_j} \mu_\theta(\tilde{X}_s^{(j)}, u_s)ds + \int_{0}^{t_j} \sigma_\theta(\tilde{X}_s^{(j)})dW_s, \\
\tilde{X}_{[t_j,1]}^{(j)} &= \text{solution to } \quad \tilde{X}_{t_j}^{(j)} + \int_{t_j}^1 \mu(\tilde{X}_s^{(j)}, u_s)ds + \int_{t_j}^1 \sigma(\tilde{X}_s^{(j)})dW_s.
\end{align}
Note that $\tilde{X}^{(0)} := X:= X^{(0,x_0,u)}$, $\tilde{X}^{(N)} = \hat{X} = \hat{X}^{(0,x_0,u)}$. Then for a stopping time $\tau$, we can therefore define
\begin{align*}
\tilde{\mathcal{J}}^{(j)}:= \mathcal{J}(0, \tilde{X}^{(j)},u) = \mathbb{E}\left[\int_0^\tau f(\tilde{X}_s^{(j)}, u)ds + \mathcal{J}(\tau, \tilde{X}_\tau^{(j)}, u)\right]
\end{align*}
By a "telescoping-sum" argument, we can write
\begin{align*}
\mathcal{J}(0, X^{(0,x_0,u)}, u) - \mathcal{J}(0, \hat{X}^{(0,x_0,u)}, u) = \tilde{\mathcal{J}}^{(0)} - \tilde{\mathcal{J}}^{(N)} = \sum_{j=0}^{N-1} \left(\tilde{\mathcal{J}}^{(j)} - \tilde{\mathcal{J}}^{(j+1)}\right).
\end{align*}
With the choice $\tau = t_j$, we can write
\begin{align*}
\tilde{\mathcal{J}}^{(j)} &= \mathbb{E}\left[\int_0^{t_j} f(s, \tilde{X}_s^{(j)}, u_s)ds + \mathbb{E}\left[\int_{t_j}^{1} f(s, \tilde{X}_s^{(j)}, u_s)ds + g(\tilde{X}_1^{(j)})\mid \tilde{\mathcal{F}}^{(j)}_{t_j}\right]\right] \\
&= \mathbb{E}\left[\int_0^{t_j} f(s, \tilde{X}_s^{(j)}, u_s)ds + \mathbb{E}\left[\int_{t_j}^{t_{j+1}} f(s, \tilde{X}_s^{(j)}, u_s)ds + \underbrace{\mathbb{E}\left[\int_{t_{j+1}}^{1} f(s, \tilde{X}_s^{(j)}, u_s)ds + g(\tilde{X}_1^{(j)})\mid \tilde{\mathcal{F}}^{(j)}_{t_{j+1}}\right]}_{\overset{a.s.}{=} \tilde{\mathcal{J}}(t_{j+1}, \tilde{X}_{t_{j+1}}^{(j)}, u)} \mid \tilde{\mathcal{F}}^{(j)}_{t_{j}}\right]\right]
\end{align*}
where $\mathcal{F}^{(j)}_{t} = \sigma \left(X^{(j)}_s \mid s \leq t \right)$. Similarly
\begin{align*}
\tilde{\mathcal{J}}^{(j+1)} = \mathbb{E}\left[\int_0^{t_j} f(s, \tilde{X}_s^{(j+1)}, u_s)ds + \mathbb{E}\left[\int_{t_j}^{t_{j+1}} f(\tilde{X}_s^{(j+1)}, u_s)ds + \underbrace{\mathbb{E}\left[\int_{t_{j+1}}^{1} f(\tilde{X}_s^{(j+1)}, u_s)ds + g(\tilde{X}_1^{(j+1)})\mid \tilde{\mathcal{F}}^{(j+1)}_{t_{j+1}}\right]}_{\overset{a.s.}{=} \mathcal{J}(t_{j+1}, \tilde{X}_{t_{j+1}}^{(j+1)}, u)} \mid \tilde{\mathcal{F}}^{(j+1)}_{t_{j}}\right]\right]
\end{align*}
Then, as $\tilde{X}_s^{(j)} = \tilde{X}_s^{(j+1)} \; \forall s \leq t_j$, we have $\tilde{\mathcal{F}}_{t_j}^{(j)} = \tilde{\mathcal{F}}_{t_j}^{(j+1)}$ from which we can write
\begin{align*}
& \tilde{\mathcal{J}}^{(j)} - \tilde{\mathcal{J}}^{(j+1)} \\ 
=& \mathbb{E}\left[\mathbb{E}\left[\left(\int_{t_j}^{t_{j+1}} f(s, \tilde{X}_s^{(j)}, u_s)ds + \mathcal{J}(t_{j+1}, \tilde{X}_{t_{j+1}}^{(j)}, u)\right) - \left(\int_{t_j}^{t_{j+1}} f(s, \tilde{X}_s^{(j+1)}, u_s)ds + \mathcal{J}(t_{j+1}, \tilde{X}_{t_{j+1}}^{(j+1)}, u)\right) \mid \tilde{\mathcal{F}}^{(j)}_{t_j}\right]\right] \\
\stackrel{(a)}{=} & \mathbb{E}\left[\int_{t_j}^{t_{j+1}} \left(h_j(\tilde{X}_{[t_j, t_{j+1}]}^{(j)}(\omega^\prime)) - h_j(\tilde{X}_{[t_j, t_{j+1}]}^{(j+1)}(\omega^\prime))\right) dP^{(t_j)}(\omega, d\omega^\prime)\right] \\
\stackrel{(b)}{=} &\mathbb{E}_{y \sim P_{\tilde{X}^{(j)}_{t_j}}}\left[\underbrace{\mathbb{E}_{P_{\hat{X}_{[t_j,t_{j+1}]} \mid \hat{X}_{t_j} = y}} \left[F_j(\tilde{x}_{[t_j,t_{j+1}]})\right] - \mathbb{E}_{P_{X_{[t_j,t_{j+1}]} \mid X_{t_j} = y}} \left[F_j(x_{[t_j,t_{j+1}]})\right]}_{D_j(y,u)}\right]
\end{align*}
where in $(a)$ we inserted the deterministic function $h_j$ given by 
\begin{align*}
    (\gamma: [t_j,t_{j+1}] \mapsto \mathbb{R}^n) \longrightarrow \int_{t_j}^{t_{j+1}} f(s, \gamma_{s}, u_s)ds + \mathcal{J} (t_{j+1}, X^{(t_j,\gamma_{t_{j+1}}, u)}, u)
\end{align*}
(note: the RHS is a random variable) and in $(b)$ we used a path notation for the push-forward measure. Note the dependence of $h_j$ on $u_{[t_j, t_{j+1}]}$, thus the subscript $j$. Altogether we therefore obtain 
\begin{align*}
\mathcal{J} - \hat{\mathcal{J}} = \sum_{j=0}^{N-1} \mathbb{E}_{y \sim P_{\tilde{X}^{(j)}_{t_j}}}\left[ \mathbb{E}_{P_{\hat{X}_{[t_j,t_{j+1}]} \mid \hat{X}_{t_j} = y}} \left[h_j(\tilde{x}_{[t_j,t_{j+1}]})\right] - \mathbb{E}_{P_{X_{[t_j,t_{j+1}]} \mid X_{t_j} = y}} \left[h_j(x_{[t_j,t_{j+1}]})\right]\right].
\end{align*}
\end{proof}

\subsection{Signature Transform and MCMD Estimator}

\paragraph{The Signature Kernel on Path Space.}
A suitable tool for learning from streamed data is the signature transform, a feature map $\mathcal{S} : C_p(I, \mathbb{R}^{d}) \rightarrow H_\mathcal{S}(\mathbb{R}^{d})$ that converts a data stream (a continuous path $x \in C_p(\mathbb{R}^{d})$ of bounded variation) into a graded sequence of iterated integrals 
\begin{align*}
    \mathcal{S}(x)\!=\! & \left( 1, \mathcal{S}^{(1)}(x), \ldots, \mathcal{S}^{(k)}(x), \ldots \right)\! \! \in \! \prod_{k=0}^{\infty}\!\left( \mathbb{R}^{d}\right)^{\otimes k},\\
    \mathcal{S}^{(k)}(x)& = \int_{t_0 < t_1 < \ldots < t_k<t_f } \!\!\!\!\!\!\!\!dx_{t_1} \otimes \ldots \otimes d x_{t_k} \in \left( \mathbb{R}^{d}\right)^{\otimes k}
\end{align*}
that form an expressive representation of the path in the Hilbert space $\prod_{k=0}^{\infty}\!\left( \mathbb{R}^{d}\right)^{\otimes k}\!\!\!\!=: \!H_\mathcal{S}(\mathbb{R}^{d})$ and capture the order and interaction of its increments.
An analogy is often drawn to monomials: just as polynomials are dense in $C(K)$ on compact $K\subset\mathbb{R}^d$ by Stone–Weierstraß, finite linear combinations of signature coordinates are dense in the continuous functionals on suitable compact sets of paths, so the signature gives a universal feature map for path-valued data.
Equipped with the canonical Hilbert space structure on $H_\mathcal{S}(\mathbb{R}^{d})$, the signature feature map induces the \defined{signature kernel}
\begin{align*}
    k_{\mathrm{sig}}(x,y):=\langle \mathcal{S}(x),\mathcal{S}(y)\rangle_{H_\mathcal{S}(\mathbb{R}^{d})},
\end{align*}
s.t.\ laws on path-space can be embedded by the \defined{kernel mean map} $\mu_{P_X}:=\mathbb{E}[k_{\mathrm{sig}}(X,\cdot)]$ (resp. the \defined{conditional mean embedding} $\mu_{P_{X\mid U}}:=\mathbb{E}[k_{\mathrm{sig}}(X,\cdot) \mid U]$), establishing the signature kernel MMD $\mathrm{MMD}_{k_{\mathrm{sig}}}(P_X,P_Y)=\lVert\mu_{P_X}-\mu_{P_Y}\rVert_{H_\mathcal{S}(\mathbb{R}^{d})}$ that we can leverage to for our model uncertainty regularization.

\paragraph{Signature-Kernel MCMD Regularizer.} 
Given distributions $P_{X,U,X_0}$, $P_{\hat{X},U,X_0}$ with samples $\{x^{(i)},u^{(i)},x_0^{(i)}\}_{i \in [N]}$, $\{\hat{x}^{(i)}, u^{(i)}, x_0^{(i)}\}_{i \in [N]}$. Then, for fixed but arbitrary $x_0 \in \mathbb{R}^{n}, u \in \mathcal{U}_{adm}$ we have the plug-in estimate for the square of the MCMD function, 
\begin{align*}
    &\hat{M}^2(P_{X\mid X_0=x_0,U=u},P_{\hat{X}\mid X_0=x_0,U=u})(x_0,u)\\
=& k_{X,U}^T(x_0,u)W_{X_0,U}K_X W_{X_0,U}^\top k_{X_0,U}(x_0,u) - 2k_{X_0,U}(x_0,u)W_{X_0,U}K_{X,\hat{X}}W_{X_0,U}^\top k_{X_0,U}(x_0,u)\\ 
&+k_{X_0,U}^\top (x_0,u)W_{X_0,U}K_{\hat{X} }W_{X_0,U}^ \top k_{X_0,U}(x_0,u)
\end{align*}
where $(K_X)_{ij} = k_{sig}(x^{(i)},x^{(j)})$, $(K_{\hat{X}})_{ij} = k_{sig}(\hat{x}^{(i)},\hat{x}^{(j)})$, $(K_{X,\hat{X}})_{ij} = k_{sig}(x^{(i)},\hat{x}^{(j)})$, $(K_{X_0,U})_{ij} = k_{X_0,U}((x_0^{(i)},u^{(i)}),(x_0^{(j)},u^{(j)}))$, $W_{X_0,U} = (K_{X_0,U} + n\lambda I_n)^{-1}$, $(k_{X_0,U}(x_0,u))_{i} = k_{X_0,U}((x_0^{(i)},u^{(i)}),(x_0,u))$.\\
Here $(\mathcal{H}_0 \otimes \mathcal{H}_U, k_{X_{0},U})$ is the tensor product of $(\mathcal{H}_0, k_{X_{0}})$ the RKHS with $k_{X_{0}}$ the rbf and RKHS $(\mathcal{H}_U, k_U)$, $k_U = k_{sig}$, together with the reproducing kernel
\begin{align*}
k_{X_{0},U} = k_{X_0} \otimes k_{sig}
\end{align*}
Thus $k_{X_{0},U}((x_0^{(i)},u^{(i)}),(x_0,u)) = k_{X_{0}}(x^{(i)},x_0)k_{sig}(u^{(i)},u)$.
\section{Integral probability metrics} \label{app:reminder_ipm}
To quantify the distance between two probability distributions, a widely used approach is in terms of integral probability metrics (IPM):
\begin{definition}[\defined{Integral probability metric (IPM)}] Let $(\mathcal{X}, \mathcal{A}_\mathcal{X})$ be a measurable space, $P, Q \in \mathcal{M}^1(\mathcal{X}, \mathcal{A}_\mathcal{X})$ and $F \subseteq \mathcal{L}^0(\mathcal{X}, \mathbb{R}) \cap B(\mathcal{X}, \mathbb{R})$ a class of measurable and bounded test functions.
The \defined{integral probability metric (IPM)} generated by $F$ is
\begin{align*}
\mathrm{IPM}[F, P, Q] := \sup_{f \in F} \big\lvert \mathbb{E}_P[f] - \mathbb{E}_Q[f] \big\rvert
\end{align*}
\end{definition}

\begin{remark} For the choice of $F$, we have the following tradeoff:
\begin{enumerate}
    \item[(i)] rich enough s.t.\ $\mathrm{IPM}[F,P,Q] = 0 \Leftrightarrow P=Q$
    \item[(ii)] not too large to estimate $\mathrm{IPM}[F,P,Q]$ quickly
\end{enumerate}
\end{remark}

\begin{example} Notable function classes, used in machine learning:
\begin{enumerate}
    \item[(i)] For $F_{TV} = \{f \in \mathcal{L}^0(\mathcal{X}, \mathbb{R}) \mid \sup_{x \in \mathcal{X}} |f(x)| \leq 1\}$, $\mathrm{IPM}[F,P,Q] =: \lVert P-Q\rVert_{TV}$ is called the \defined{total variational distance}
    \item[(ii)] For $F_K = \{\chi_{(-\infty,t]} \mid t \in \mathbb{R}\}$ on $\mathcal{X} = \mathbb{R}$, $\mathrm{IPM}[F,P,Q]$ is called the \defined{Kolmogorov} / \defined{$L^\infty$-distance}, the max-norm of their cumulative distributions.
    \item[(iii)] for metric space $(\mathcal{X}, d)$ ($\mathcal{A}_\mathcal{X} = \mathcal{B}(\mathcal{T}_d)$) and $F_W = \{f \in L^0(\mathcal{X}, \mathbb{R}) \cap B(\mathcal{X}, \mathbb{R}) \mid \|f\|_L := \sup_{x \neq y} \frac{|f(x) - f(y)|}{d(x,y)} \leq 1\}$, $\text{IPM}[F_W, P,Q]$ is called \defined{earth mover} / \defined{Wasserstein} / \defined{$L^1$-distance}
\end{enumerate}
\end{example}

\begin{definition}[\defined{Maximum Mean Discrepancy}] Let $(\mathcal{X}, \mathcal{A}_\mathcal{X})$ be a measurable space, $(H \subseteq \mathbb{C}^{\mathcal{X}}, \langle \cdot, \cdot \rangle_{H}, k: \mathcal{X} \times \mathcal{X} \to \mathbb{C})$ a RKHS, $P,Q \in \mathcal{M}^1(\mathcal{X}, \mathcal{A}_\mathcal{X})$ and $\mathcal{H} = \{f \in H \mid \|f\|_{H} \leq 1\}$. The integral probability metric $\mathrm{IPM}[\mathcal{H}, P,Q]$ is called \defined{maximum mean discrepancy (MMD)}, written
\begin{align*}
\mathrm{MMD}_k[P,Q] := \mathrm{IPM}[\mathcal{H}, P,Q] =& \sup_{f \in \mathcal{H}} \left| \int_{\mathcal{X}} f(x) \, dP(x) - \int_{\mathcal{X}} f(x) \, dQ(x) \right|\\
=& \sup_{f \in \mathcal{H}} \left| \langle f, \mu_{P} - \mu_{Q} \rangle_{H} \right| = \|\mu_{P} - \mu_{Q}\|_{H}
\end{align*}
where we used $f(x) = \langle f, k(\cdot, x) \rangle_{H}$.
\end{definition}

\begin{remark}[Expression of MMD, definiteness] Note:
\begin{enumerate}
    \item[(i)] We can express the MMD in terms of kernel $k$ as
    \begin{align*}
    \mathrm{MMD}_k^2[P,Q] = \|\mu_{P} - \mu_{Q}\|_{\mathcal{H}}^2 = \mathbb{E}_{X,X'}[k(X,X')] - 2\mathbb{E}_{X,Y}[k(X,Y)] + \mathbb{E}_{Y,Y'}[k(Y,Y')]
    \end{align*}
    where $X,X' \sim P$, $Y,Y' \sim Q$ independent, and we use
    \begin{align*}
    \|\mu_{P}\|_{H}^2 =& \left\langle \int k(\cdot,x') dP(x'), \int k(\cdot,x) dP(x) \right\rangle_{H} = \iint \langle k(\cdot,x'), k(\cdot,x) \rangle_{H} dP(x') dP(x)\\ =& \mathbb{E}_{X,X'}[k(X,X')]
    \end{align*}
    \item[(ii)] If $H$ resp. $k$ characteristic, the MMD is definite, meaning
    \begin{align*}
    \mathrm{MMD}_k[P,Q] = 0 \Leftrightarrow P = Q
    \end{align*}
\end{enumerate}
\end{remark}

\section{Experimental Details}\label{app:experimental_details}

\subsection{Datasets} \label{app:datasets}
\subsubsection*{Cancer \citep{geng2017prediction}} 
The cancer dataset is simulated using the lung cancer PK/PD model of \citet{geng2017prediction}, which has been used extensively in longitudinal treatment-effect benchmarks.

\paragraph{Dynamics.}
The state is $X_t=(V_t, C_t)\in \mathbb{R}^2_{+}$, where $V_t$ denotes tumor volume and $C_t$ denotes the chemotherapy drug concentration.
The control is $U_t=(U_{c,t},U_{r,t})\in\mathbb{R}^2_{+}$, where $U_{c,t}$ is the administered chemotherapy dose and $U_{r,t}$ is the radiotherapy dose.

Tumor volume evolves according to an ODE with multiplicative noise,
\begin{align}
\frac{\mathrm{d}V}{\mathrm{d}t}
=
\Bigg(
\rho \log\!\Big(\frac{K}{V(t)}\Big)
- \beta_c\, C(t)
- \big(\alpha_r\, U_r(t) + \beta_r\, U_r(t)^2\big)
+ e_t
\Bigg)\, V(t),
\end{align}
where $e_t \sim \mathcal{N}(0,\sigma^2)$. Equivalently, this can be written as the SDE
\begin{align}
\mathrm{d}V_t
&=
\Big[
\rho \log\!\Big(\frac{K}{V_t}\Big)
- \beta_c\, C_t
- \alpha_r\, U_{r,t}
- \beta_r\, U_{r,t}^2
\Big] V_t\, \mathrm{d}t
+
\sigma\, V_t\, \mathrm{d}W_t,
\end{align}
with standard Brownian motion $W_t$.
We set $(\rho,K,\beta_c,\alpha_r,\beta_r)$ to the mean values of the prior distribution reported by \citet{geng2017prediction} and fix the noise scale to $\sigma = 0.1$ in our simulations.

The chemotherapy concentration follows exponential decay with input from administered chemotherapy:
\begin{align}
\frac{\mathrm{d}C}{\mathrm{d}t}
=
- k_C\, C(t) + U_c(t),
\end{align}
where $k_C=0.5$.

\paragraph{Treatment Process.}
For each simulated patient trajectory, we sample one of two treatment protocols inspired by \cite{curran2011sequential,hess2023bayesian,vanderschueren2023accounting}: \emph{sequential} therapy (three weeks of weekly chemotherapy followed by three weeks of radiotherapy) or \emph{concurrent} therapy (biweekly chemo--radiotherapy administration over six weeks). Treatments are modeled as bang-bang controls: at each dosing time $t$, chemotherapy is either off or delivered at a fixed dose, $U_{c,t}\in\{0,5\}$ (mg), and radiotherapy is either off or delivered at a fixed fraction dose, $U_{r,t}\in\{0,2\}$ (Gy), where Gy denotes Gray (ionizing radiation dose).

\paragraph{Initial State.}
Following \citet{geng2017prediction}, we first sample a cancer stage and then sample the initial tumor volume $V_0$ from the corresponding stage-conditional prior defined in \citet{geng2017prediction}. The initial chemo concentration $C_0$ is set to zero. 

\paragraph{Simulation and Preprocessing.}
We generate $800$ patient trajectories over a $60$-day time window ($t \in \left[0, 60\right]$) for training and sample $128$ for validation purposes. To induce irregular observation patterns, we randomly mask $30\%$ of observations uniformly at random. We apply a log-transform to the state variables and then standardize (zero mean, unit variance) using statistics computed on the training set only. Controls are min--max normalized to $[0,1]$ using the known dose bounds.

\subsubsection*{Covid-19 \citep{dai2021prototype, qian2021integrating}}

We use the COVID-19 disease progression simulator under dexamethasone treatment proposed by \citet{qian2021integrating} (adoption of \citet{dai2021prototype}). The simulator describes the interaction between disease progression and immune response, coupled with a treatment/exposure component.

\paragraph{Dynamics.}
The original simulator in \citet{qian2021integrating} is specified as a deterministic ODE system. To model stochasticity and unobserved heterogeneity, we augment the dynamics with additive diffusion terms, yielding the following SDE formulation with state $X_t=(X_{1,t},X_{2,t},X_{3,t},X_{4,t})$.
The state variables $X_{1,t}, X_{2,t}, X_{3,t}, X_{4,t}$ represent viral load, innate immune response to viral infection, adaptive immunity, and (effective) dexamethasone concentration in lung tissue, respectively:
\begin{align*}
dX_{1,t} &= \left( k_{dp} X_{1,t} - k_{di} X_{1,t} X_{3,t}^{h_c} - k_{dr} X_{1,t} X_{2,t} \right) dt + \sigma_1 X_{1,t}dW^1_t, \\
dX_{2,t} &= \left( k_{id} X_{1,t} - k_{io} X_{2,t} + k_{if} X_{1,t} X_{2,t}
+ \frac{k_{ep} X_{2,t}^{h_p}}{k_{cp}^{h_p} + X_{2,t}^{h_p}} - k_d X_{4,t} X_{2,t} \right) dt + \sigma_2  X_{2,t} dW^2_t, \\
dX_{3,t} &= \left( k_{im} X_{2,t} \right) dt + \sigma_3 X_{3,t} dW^3_t, \\
dX_{4,t} &= \left( k_{kel} U_{c,t} - k_{kel} X_{4,t} \right) dt + \sigma_4 X_{4,t} dW^4_t.
\end{align*}
Here, $W^i_t$ denote independent standard Wiener processes and we set $\sigma= \sigma_i = 0.1$ for $i \in \{1,2,3,4\}$. The dynamics parameters are positive real numbers and described in \citet{qian2021integrating,dai2021prototype}. Following \citet{qian2021integrating} we set the coefficients $h_p = h_c = 2$ and the rest to one. 

The signal $U_{c, t}$ denotes our control variable and represents the exogenous dexamethasone administration/exposure input.

\paragraph{Treatment Process.}
We consider a single-intervention setting over a 14-day horizon. Dexamethasone is administered at one of three predefined intervention times, $t \in \{t_1, t_2, t_3 \}$. Concretely, for each simulated trajectory we sample an intervention time $t^\star \in \{t_1,t_2,t_3\}$ and a dosage $d \sim \mathrm{Unif}(0,10)$ mg and define the control signal as an exponential decay
\begin{align*}
    U_{c}(t) = d \cdot \mathbb{I}(t \ge t^\star)\exp\big(-k_{kel}(t - t^\star)\big).
\end{align*}

\paragraph{Initial State.}
We sample initial conditions $X_0$ from an exponential distribution with rate $\lambda = 100$, as described in \citet{qian2021integrating}. 

\paragraph{Simulation and Preprocessing.}
In total we sample $500$ trajectories over a $14$-day horizon. We sample $5$ trajectories per initial condition and treatment. State trajectories are standardized using statistics computed on the training set (zero mean and unit variance per dimension).
Treatment inputs are min--max normalized to $[0,1]$ using the known dose bounds $d\in[0,10]$. Validation trajectories consist of $480$ trajectories sampled in the same way.

\subsection{Treatment Optimization Objectives}\label{app:treatment_optimization_objectives}

We consider three treatment optimization objectives, two for the cancer simulator and one for the Covid-19 simulator. In all cases, we optimize over admissible treatment plans $U \in \mathcal{U}$ and evaluate objectives under the (stochastic) simulator dynamics, hence the expectation with respect to the driving noise.

\paragraph{Cancer -- Explicit Target State.}
Let $X_t=(V_t,C_t)$ denote the cancer state, where $V_t$ is tumor volume. Given an initial patient state $V_{t_0}=v_0$ and initial concentration $C_{t_0}=0$, we seek a treatment plan that minimizes terminal tumor volume while regularizing treatment intensity:
\begin{align}
\min_{u \in \mathcal{U}} \;
\mathbb{E}\!\left[
\left.
\int_{t_0}^{t_f}
\lambda_c\, u_{c,t}^2 + \lambda_r\, u_{r,t}^2 \,\mathrm{d}t
\;+\;
V_{t_f}^2
\ \right|\ X_{t_0}=(v_0,0)
\right],
\qquad t_f=60,
\label{eq:obj_cancer_abs}
\end{align}
where $\lambda_c=\lambda_r=10^{-3}$ in our experiments.

\paragraph{Cancer -- Relative Target State.}
In the second cancer objective, given an initial patient state $V_{t_0}=v_0$ and $C_{t_0}=0$, the goal is to achieve a prescribed \emph{relative} reduction in tumor volume at the terminal time. We define the relative target $v^\star := 0.3\, v_0$, 
corresponding to a $70\%$ reduction from the initial state. We minimize the deviation from this target while regularizing treatment intensity:
\begin{align}
\min_{u \in \mathcal{U}} \;
\mathbb{E}\!\left[
\left.
\int_{t_0}^{t_f}
\lambda_c\, u_{c,t}^2 + \lambda_r\, u_{r,t}^2 \,\mathrm{d}t
\;+\;
\big(V_{t_f} - V^\star\big)^2
\ \right|\ X_{t_0}=(v_0,0)
\right],
\qquad t_f=60,
\label{eq:obj_cancer_rel}
\end{align}
with $\lambda_c=\lambda_r=10^{-3}$.

\paragraph{COVID-19 -- Target Trajectory.}
Let $X_t \in \mathbb{R}^{d_x}$ denote the COVID-19 simulator state and let $u_t\in\mathbb{R}^{d_u}$ denote the (one-dimensional) dexamethasone exposure control. Given an initial condition $X_{t_0}=x_0$, we define a target trajectory $X_t^\star$ over $[t_0,t_f]$ by sampling $M$ trajectories $\{X^{(m)}\}_{m=1}^M$ from the data-generating process (including initial conditions) and averaging pointwise in time $X_t^\star := \frac{1}{M}\sum_{m=1}^M X^{(m)}_t, \qquad t\in[t_0,t_f]$. We then optimize treatment to track $X^\star$:
\begin{align}
\min_{u \in \mathcal{U}} \;
\mathbb{E}\!\left[
\left.
\int_{t_0}^{t_f}
\lambda_x\, \|X_t - X_t^\star\|_2^2
\,\mathrm{d}t
\ \right|\ X_{t_0}=x_0
\right].
\label{eq:obj_covid_track}
\end{align}
We choose a factor of $\lambda_x = \frac{1}{1000}$ so that the regularization strength of our regularizer can live on the same scale as for the Cancer task. When reporting the costs, we choose $\lambda_x = 1$

For all tasks, we solve the problem $15$ times corresponding to $15$ different initial conditions. For the Cancer tasks, we sample from the same initial state prior distribution as the training dataset and restrict ourselves to initial tumors with a diameter between $2$ and $5$cm. For the Covid task, we sample $15$ initial conditions from the same prior distribution as the training dataset and then $15$ treatment trajectories from the same prior used for the training dataset. Then, the target trajectory is obtained by sampling and then averaging $20$  trajectories for each (treatment, initial condition) pair. 

\subsection{Treatment Function Parameterization}\label{app:control_parameterization}

We solve the treatment optimization problem using a direct single-shooting approach. For details on numeric approaches to solving continuous-time optimal control problems, we refer to \cite{gros2020numerical}. We parameterize the control function $u(t)$ (denoted by $u(t,q)$) with the parameter $\phi$.

Specifically, for the Cancer tasks where we have a two-dimensional control consisting of the chemo- and radiotherapy treatment, we parameterize the chemo parameters $\phi_{c} = [\phi_{c,timepoints}, \phi_{c,dosage}]$, with  $\phi_{c,timepoints}$ denoting the timepoints of treatment administration, and $\phi_{timepoints}$ the respective dosages. Similarly we parameterize the radiotherapy parameters $\phi_{r}$.  We restrict the controls to a maximum of $K$ treatment administrations for chemo- and radiotherapy, respectively. Now, to obtain a control function from our parameters, we model the control similarly to the data generating process and model the control function $u_{chemo}(t)$ by modeling a bang-bang control and setting $u_{chemo}(t) = \sum_{i=1}^{K} \mathbb{I}_{(t \in [\phi_{c,timepoints, i}, \phi_{c,timepoints, i} + 1])} \cdot \phi_{c,dosage; i}$, where $\mathbb{I}$ is the indicator function. The function $u_{radio}(t)$ is constructed similarly.

Second, for the Covid task, we have a one-dimensional control corresponding to the Dexamethasone administration. Again, we use a similar parameterization corresponding to the timepoints and the corresponding dosage $\phi = [\phi_{timepoints}, \phi_{dosage}]$. Again, we restrict the treatment trajectory to a maximum of $K$ treatments administrations. In the Covid simulator, the control variable is defined with an exponential decay. Thus, we obtain the control function from our parameters by modeling $u(t) = \sum_{i=1}^{K} \phi_{dosages, i} \cdot \mathbb{I}_{t > \phi_{dosages,i}} \cdot \exp(t_i - t)$.

\subsection{Practical Implementation Details of Our Approach}\label{app:practical_implementation_of_ours}
\subsubsection*{Neural SDE}
We model the drift and diffusion of the SDE with neural networks $\mu_\theta :\mathbb{R}^{d_y} \times A \to \mathbb{R}^{d_y}$, $\sigma_\theta : \mathbb{R}^{d_x} \times A \to \mathbb{R}^{d_x\times d_w}$ and $\zeta_\theta : \mathbb{R}^{d_\nu} \to \mathbb{R}^{d_x}$ to model the strong solution $\hat{X} : I \rightarrow \mathbb{R}^{d_x}$ of 
\begin{align}
    \dif \hat{X}_t = \mu_\theta(\hat{X}_t,U_t) \dif t + \sigma_\theta (\hat{X}_t,U_t) \dif W_t, \quad \hat{X}_0 = \zeta_\theta(V), \: V \sim \mathcal{N}(0,I_{d_v})
\end{align}
$W : I\to\mathbb{R}^{d_w} $ is a $ d_w$-dimensional Brownian motion and $U = (U_t)_{t \in I}$ our control. 

In all experiments, we parameterized the drift and diffusion so that each state has its own independent neural networks. Each drift neural network consists of a three-layer fully-connected neural network with $64$ hidden units, while the diffusion networks consist of a one-layer fully-connected neural network with $8$ hidden units. We use $\text{LipSwish}$-activation function for intermediate layers and $\text{tanh}$-activation for the final layers. We optimize the neural SDE using a signature-kernel score objective (see below, based on \citet{issa2023non}). For training we use the RMSProp optimizer with a learning rate of $0.0001$ and $0.001$ for the Cancer and Covid dataset respectively. We optimize over $15000$ and $30000$ gradient steps, where each gradient step has a batch size of $16$ trajectories. 

\paragraph{Signature score for training neural SDEs}
Here, we detail how we adapt the signature score training method by \citet{issa2023non} to our conditional training as we have samples from the treatment process and the state process. \citet{issa2023non} propose a non-adversarial training objective for neural SDEs based on the \defined{signature kernel}: The \defined{signature kernel score} of $P \in \mathcal{M}^1(\mathcal{X}, \mathcal{B}(\mathcal{T}_{\mathcal{X}}))$  
and $y \in \mathcal{X}$ is the map $\Phi_{\mathrm{sig}} \colon \mathcal{M}^1(\mathcal{X}, \mathcal{B}(\mathcal{T}_{\mathcal{X}})) \times \mathcal{X} \longrightarrow \mathbb{R}
$
defined by
\begin{align*}
\Phi_{\mathrm{sig}}(P, y) := \mathbb{E}_{x, x' \sim P} \left[ k_{\mathrm{sig}}(x, x') \right] - 2\, \mathbb{E}_{x \sim P} \left[ k_{\mathrm{sig}}(x, y) \right].
\end{align*}
$\Phi_{\mathrm{sig}}$ is a strictly proper score, meaning it assigns the lowest expected score when the proposed prediction is realized by the true probability distribution.
Let $(\Omega, \mathcal{F}, P)$ be a probability space, $T > 0$, $d_x \in \mathbb{N}$. For path-space 
\begin{align*}
\mathcal{X} := \left\{ \gamma \in C_{\mathrm{BV}}([0,T], \mathbb{R}^{d_x}) \;\middle|\; \exists i \in [d]:\ \gamma^i \text{ monotone} \right\}, 
\end{align*}
Let $\mathcal{T}$ be a topology such that the signature kernel $k_{\mathrm{sig}} \colon \mathcal{X} \times \mathcal{X} \to \mathbb{R}$ is continuous. Let $X^{\mathrm{true}} \colon \Omega \to \mathcal{X}$ be a random variable with sample $\{x^{(i)}\}_{i=1}^N \overset{iid}{\sim} P_{X^{\mathrm{true}}} \in \mathcal{M}^1(\mathcal{X}, \mathcal{B}(\mathcal{T}_{\mathcal{X}}))$.

In the unconditional setting (as described in \citet{issa2023non}), one can learn a neural SDE $X^\theta$ into $\mathcal{X}$ such that its law $P_{X^\theta} \approx P_{X^{\mathrm{true}}}$, use the training objective
\begin{align*}
\min_\theta \mathcal{L}(\theta), \qquad \mathcal{L}(\theta) := \mathbb{E}_{y \sim P_{X^{\mathrm{true}}}} \left[ \Phi_{\mathrm{sig}}(P_{X^\theta}, y) \right],
\end{align*}
which is equivalent to optimization using $\mathrm{MMD}^2_{k_{\mathrm{sig}}}[P_{X^\theta}, P_{X^{\mathrm{true}}}]$ as the last term $\mathbb{E}_{y,y' \sim P_{X^{\mathrm{true}}}} [k_{\mathrm{sig}}(y,y')]$ is constant w.r.t.\ $\theta$.

However, our neural SDE is a conditional setting, since we observe both a treatment process and a state process. Given a distribution $P_Z$ to condition on, $P_{X^{\mathrm{true}} \mid Z} (\cdot \mid z)$, $z  \sim P_Z$ the target conditional distribution with sample $\{(z^{(i)}, x^{(i)})\}_{i \in [N]}$, $z^{(i)} \sim P_Z$, $x^{(i)} \sim P_{X^{\mathrm{true}} \mid Z} (\cdot \mid z^{(i)})$. To learn a neural SDE $X^\theta$ into $\mathcal{X}$ such that $P_{X^\theta}(\cdot \mid z) \approx P_{X^{\mathrm{true}}}(\cdot \mid z)$, train with the modified objective:
\begin{align*}
\min_\theta \mathcal{L}'(\theta), \quad \text{where} \quad
\mathcal{L}'(\theta) := \mathbb{E}_{z \sim P_Z} \left[ \mathbb{E}_{x \sim P_{X^{\mathrm{true}}}(\cdot \mid z)} 
\left[ \Phi_{\mathrm{sig}}(P_{X^\theta} (\cdot \mid z), x) \right] \right].
\end{align*}    
Thus, in our controlled SDE setting, we have to adapt the above conditional objective in the following way.
Given a control signal $u$ to condition on, the target conditional distribution $P_{X^{\mathrm{true}} \mid U} (\cdot \mid u)$ with sample $\{(u^{(i)}, x^{(i)})\}_{i \in [N]}$, $u^{(i)} \sim P_U$, $x^{(i)} \sim P_{X^{\mathrm{true}} \mid U} (\cdot \mid u^{(i)})$. To learn a neural SDE $X^\theta$ into $\mathcal{X}$ such that $P_{X^\theta}(\cdot \mid u,x_0) \approx P_{X^{\mathrm{true}}}(\cdot \mid u,x_0)$, train with the modified objective:
$$
\min_\theta \mathcal{L}'(\theta), \quad \text{where} \quad
\mathcal{L}'(\theta) := \mathbb{E}_{u \sim P_U, x_0 \sim P_{X_0}} \left[ \mathbb{E}_{x \sim P_{X^{\mathrm{true}}}(\cdot \mid u, x_0)} 
\left[ \Phi_{\mathrm{sig}}(P_{X^\theta} (\cdot \mid u, x_0), x) \right] \right].
$$

In all of our experiments, we use an RBF-lifting kernel with a smoothing parameter $\sigma = 1$ to compute the above loss. We set the order of dyadic refinement associated to the PDE solver for computing the signature kernel to $1$. Following \citet{issa2023non}, we apply a time-normalization transform and augment true and predicted paths with the normalized-time component before passing them through the loss function.

\subsubsection*{Treatment Plan Optimization}
\paragraph{Signature-Kernel Based Regularizer.} We compute the MCMD regularizer using the respective validation dataset together with an RBF-lifting kernel with smoothing parameter $\sigma = 1$. Again, we use a dyadic refinement factor $1$ for the PDE solver and apply a time-normalization transform and augment true and predicted paths with the normalized-time component before evaluating the path. For the initial condition, we also use an RBF-kernel with smoothing parameter $\sigma = 1$.

\paragraph{Optimization Procedure}
We use the parameterized representation of the control functions described in \Cref{app:control_parameterization}. This allows for direct single-shooting control optimization. We initialize the timepoint parameters $\phi_{timepoints}$ by uniformly sampling over the time interval $I$. The dosage parameters $\phi_{dosage}$ are initialized by uniformly sampling between $[0.1, 0.3]$. We use gradient-descent based methods (RMSProp optimizer) with a learning rate of $0.001$ for both Cancer tasks and a learning rate of $0.01$ for the Covid task. All tasks are optimized with $5000$ gradient-update steps. Moreover, we use Monte Carlo sampling with a sample size $=10$ to approximate the expectation in our objective function $\mathcal{J}$.

\paragraph{Ours (rank)}
When we report the Spearman correlation (to compare with the benchmark methods), we rely on the same control library. There, we simply use our trained neural SDE to predict the costs for the respective candidate controls. We do not use a regularization for these settings. 

\subsection{Baseline Methods} \label{app:baselines}

\paragraph{TE-CDE \cite{seedat2022continuous}.} 
In our experiments, we use the same model architecture as proposed in the original paper, meaning a 2-layer neural network with hidden state dimension $128$ for both the neural CDE encoder and decoder. We train the decoder already to predict the same time horizon as needed in our control tasks. We also adjust the balancing-representation loss objective from a cross-entropy loss to a mean-squared-error loss as it naturally extends to our general formulation of continuous-valued treatments. We perform a grid-search for selecting the learning rate $\in \{0.0001, 0.001, 0.01\}$, number of epochs $\in \{50, 100\}$ and dimensions of the latent state $\in \{8, 16\}$. The optimal hyperparameters are determined by choosing the set that results in the lowest mean squared error on the validation dataset, for the Cancer and Covid datasets, respectively.

\paragraph{INSITE/SINDy. \cite{kacprzyk2024ode,brunton2016discovering}} INSITE is a framework for bridging treatment effect estimation and dynamical system discovery. Following \citet{kacprzyk2024ode}, we also instantiate the INSITE framework using SINDy \citep{brunton2016discovering}. Importantly, since we do not have patient heterogeneity in our experiments (so where each patient has their ``own'' dynamics parameters) and instead model patient-conditioned effects by conditioning on the observed history, we only obtain the population-level dynamics equation and not further fine-tune it on a specific trajectory. Moreover following \cite{kacprzyk2024ode}, we incorporate the continuous-valued treatment directly as a control input and use SINDy with control. SINDy works as follows: Assuming that the target symbolic expression $f$ is a linear combination of library functions from $L\subset \mathrm{Funct}(\mathbb{R}^{d_x},\mathbb{R}^{d_y})$
($f=\sum \theta_i h_i$ or $\mathrm{span}(L)=\mathcal{F}$), \defined{linear symbolic regression} solves the multivariate, multidimensional SR - problem
($\mathcal{F}\subseteq \mathrm{Funct}(\mathbb{R}^{d_x},\mathbb{R}^{d_y}),\; \mathcal{D}=\{(x_i,y_i)\}_{i\in[n]},\; x_i\in \mathbb{R}^{d_x},\; y_i\in \mathbb{R}^{d_y}$) by:
\begin{enumerate}
\item Assign each $f_j\in L$ a coefficient vector $\theta_j\in\mathbb{R}^{d_y}$ with with $\theta_0\in\mathbb{R}^{d_y}$ s.t.
\begin{align*}
y=\sum_{j=1}^k f_j(x)\,\theta_j + \theta_0 \qquad \text{(A)}
\end{align*}
\item[(ii)] Apply (A) to all input - output pair to obtain a system of linear equations, that can be written in matrix-form as
\begin{align*}
\underbrace{\begin{pmatrix}
y_1^\top\\
y_2^\top\\
\vdots\\
y_n^\top
\end{pmatrix}}_{Y}
=
\underbrace{\begin{pmatrix}
1 & f_1(x_1) & f_2(x_1) & \cdots & f_k(x_1)\\
1 & f_1(x_2) & f_2(x_2) & \cdots & f_k(x_2)\\
\vdots & \vdots & \vdots & & \vdots\\
1 & f_1(x_n) & f_2(x_n) & \cdots & f_k(x_n)
\end{pmatrix}}_{U(X)}
\underbrace{\begin{pmatrix}
\theta_0^\top\\
\theta_1^\top\\
\vdots\\
\theta_k^\top
\end{pmatrix}}_{\Theta}
\qquad \text{(B)}
\end{align*}
or more compact as $Y=U(X)\cdot \Theta$ where $U:=U(X)\in \mathbb{R}^{n\times (k+1)}$ the library functions evaluated on input data and $\Theta\in \mathbb{R}^{(k+1)\times d_y}$ the (sparse) matrix of coefficients.
\item[(iii)] Solve (B) with$\Theta=\bigl(U^\top U\bigr)^\dagger U^\top Y $
\end{enumerate}
\begin{remark} The above describe SINDy-framework is extend to control inputs (SINDy with control; see \citet{brunton_sparse_2016}) by augmenting the library of functions in $(x,u)$ and fitting the dynamics $\dot{x} = f(x,u)$,
\end{remark}

In order to establish a fair comparison with the SINDy baseline, similar to \citet{kacprzyk2024ode}, we used the STLSQ optimizer together with the polynomial library (instead of the wSINDy weak-form library) and performed hyperparameter-tuning over the polynomial degree $\mathrm{deg} \in \lbrace 1,2 \rbrace$, threshold $\lbrace 0.1,0.2,0.5 \rbrace$ and the ridge regularization parameter $\lambda \in \lbrace 0.1,0.2,0.5 \rbrace$.

\paragraph{Library of Candidate Controls.}
The library of candidate controls consists of $100$ control candidates. We randomly sample the parameters $\phi$ which parameterize the continuous-time control function. For the respective problems, we use the parameterization described in \Cref{app:control_parameterization}. We initialize the treatment timepoints $\phi_{timepoints}$ by uniformly sampling over the time interval of interest $[t_0, t_f]$. The dosages are obtained by randomly sampling over the interval $[0.1, 0.3]$ (in the min-max-normalized control space). For the Cancer tasks, we choose a maximum of $K=5$ treatment administrations. For the Covid task, we chose $K=1$ treatment mirroring the fact that the target trajectory was also generated with a single treatment administration.

\section{Additional Results}
\begin{figure}[htbp]
  \centering
  \begin{subfigure}[t]{0.45\textwidth}
    \centering
    \includegraphics[width=\linewidth]{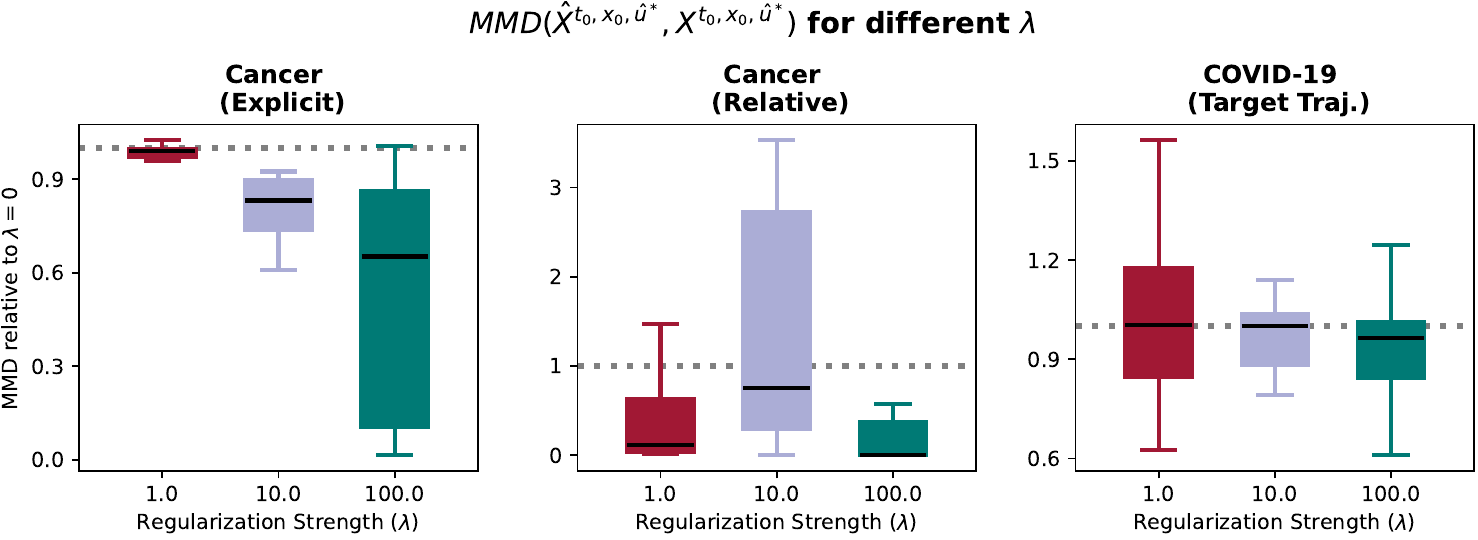}
    \caption{$\mathrm{MMD}_{k_{\mathrm{sig}}}$ between the true and predicted patient trajectory distribution for the optimal control obtained with different levels of regularization strength $\lambda$. This figure shows the Signature-Kernel-Score.}
    \label{fig:left}
  \end{subfigure}\hfill
  \begin{subfigure}[t]{0.45\textwidth}
    \centering
    \includegraphics[width=\linewidth]{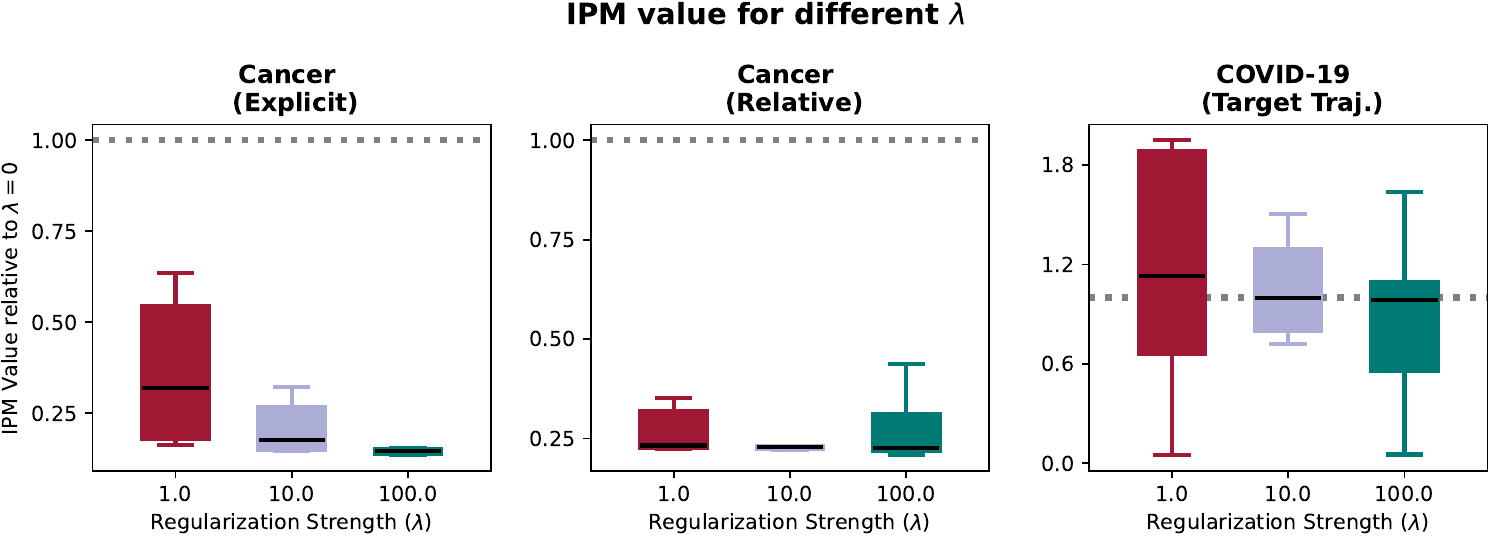}
    \caption{$\mathrm{MCMD}_{k_{\mathrm{sig}}}$ estimator for the optimal control obtained with different levels of regularization strength $\lambda$.}
    \label{fig:right}
  \end{subfigure}
  \caption{\textbf{Effects of regularization strength parameter $\lambda$}.}
  \label{fig:two-side-by-side}
\end{figure}

\end{document}